\documentclass[twoside,10pt]{article}

%


\usepackage{hyperref}
\usepackage{url}
\usepackage{graphicx}
\usepackage{algorithm}
\usepackage[noend]{algorithmic}
\usepackage{amsthm}
\usepackage{bbm}
\usepackage{amsmath}
\usepackage{amssymb}
\usepackage{mathtools}

\def\R{\mathbb{R}}
\def\E{\mathbb{E}}

\def\1{\mathbbm{1}}

\def\half{\frac{1}{2}}

\def\cD{\mathcal{D}}
\def\cE{\mathcal{E}}
\def\cF{\mathcal{F}}

\def\cH{\mathcal{H}}

\def\cN{\mathcal{N}}

\def\cU{\mathcal{U}}

\def\cW{\mathcal{W}}
\def\cX{\mathcal{X}}
\def\cY{\mathcal{Y}}

\def \bx {\mathbf{x}}

\makeatletter
\newcommand{\printfnsymbol}[1]{%
  \textsuperscript{\@fnsymbol{#1}}%
}
\makeatother


\usepackage{amsmath,amsfonts,bm}









\def\eqref#1{equation~\ref{#1}}









\def\1{\bm{1}}










\DeclareMathAlphabet{\mathsfit}{\encodingdefault}{\sfdefault}{m}{sl}
\SetMathAlphabet{\mathsfit}{bold}{\encodingdefault}{\sfdefault}{bx}{n}
















\DeclareMathOperator*{\argmax}{arg\,max}
\DeclareMathOperator*{\argmin}{arg\,min}

\usepackage{jmlr2e}


\jmlrheading{1}{2000}{1-48}{4/00}{10/00}{Chuanhao Li, Chong Liu and Yuxiang Wang}


\ShortHeadings{Communication-Efficient Federated Non-Linear Bandit Optimization}{Li, Liu and Wang}
\firstpageno{1}

\begin{document}

\title{Communication-Efficient Federated Non-Linear \\Bandit Optimization}

\author{\name Chuanhao Li $^*$ \email chuanhao.li.cl2637@yale.edu \\
       \addr Department of Statistics and Data Science\\
       Yale University\\
       New Haven, CT 06520, USA\\
       \AND
       \name Chong Liu $^*$ \email chongl@uchicago.edu \\
       \addr Data Science Institute\\
       University of Chicago\\
       Chicago, IL 60637, USA\\
       \AND
       \name Yu-Xiang Wang \email yuxiangw@cs.ucsb.edu \\
       \addr Department of Computer Science\\
       University of California\\
       Santa Barbara, CA 93106, USA
       }


\maketitle
\def\thefootnote{*}\footnotetext{equal contribution}\def\thefootnote{\arabic{footnote}}
\vspace{.3in}

\begin{abstract}
Federated optimization studies the problem of collaborative function optimization among multiple clients (e.g. mobile devices or organizations) under the coordination of a central server. Since the data is collected separately by each client and always remains decentralized, federated optimization preserves data privacy and allows for large-scale computing, which makes it a promising decentralized machine learning paradigm. Though it is often deployed for tasks that are online in nature, e.g., next-word prediction on keyboard apps, most works formulate it as an offline problem. The few exceptions that consider federated bandit optimization are limited to very simplistic function classes, e.g., linear, generalized linear, or non-parametric function class with bounded RKHS norm, which severely hinders its practical usage. In this paper, we propose a new algorithm, named Fed-GO-UCB, for federated bandit optimization with generic non-linear objective function. Under some mild conditions, we rigorously prove that Fed-GO-UCB is able to achieve sub-linear rate for both cumulative regret and communication cost. At the heart of our theoretical analysis are distributed regression oracle and individual confidence set construction, which can be of independent interests. Empirical evaluations also demonstrate the effectiveness of the proposed algorithm.

\end{abstract}

\begin{keywords}
  global optimization, bandit learning, federated learning, communication efficiency
\end{keywords}

\section{Introduction} \label{sec:intro}





Federated optimization is a machine learning method that enables collaborative model estimation over decentralized dataset without data sharing \citep{mcmahan2017communication,kairouz2019advances}. It allows the creation of a shared global model with personal data remaining in local sites instead of being transferred to a central location, and thus reduces the risks of personal data breaches. 
While the main focus of the state-of-the-art federated optimization is on the offline setting, where the objective is to obtain a good model estimation based on fixed dataset \citep{li2019convergence,mitra2021achieving}, several recent research efforts have been made to extend federated optimization to the online setting, i.e., federated bandit optimization \citep{wang2020distributed,li2022glb,li2022kernel}. 

Compared with its offline counterpart, federated bandit optimization is characterized by its online interactions with the environment, which continuously provides new data points to the clients over time. 
The objective of the clients is to collaboratively minimize cumulative regret, which measures how fast they can find the optimal decision, as well as the quality of decisions made during the trial-and-error learning process.
This new paradigm greatly improves sample efficiency, as the clients not only collaborate on model estimation, but also actively select informative data points to evaluate in a coordinated manner. Moreover, compared with the standard Bayesian optimization approach \citep{shahriari2015taking}, the improved data protection of federated bandit optimization makes it a better choice for applications involving sensitive data, such as recommender systems \citep{li2010contextual}, clinical trials \citep{villar2015multi} and sequential portfolio selection \citep{shen2015portfolio}. For example, medical data such as disease symptoms and medical reports are very sensitive and private, and are typically stored in isolated medical centers and hospitals \citep{yang2019federated}. Federated bandit optimization offers a principled way for different medical institutions to jointly solve optimization problems for smart healthcare applications, while ensuring privacy and communication efficiency.

\begin{figure}[t]
\centering
\includegraphics[width=0.99\textwidth]{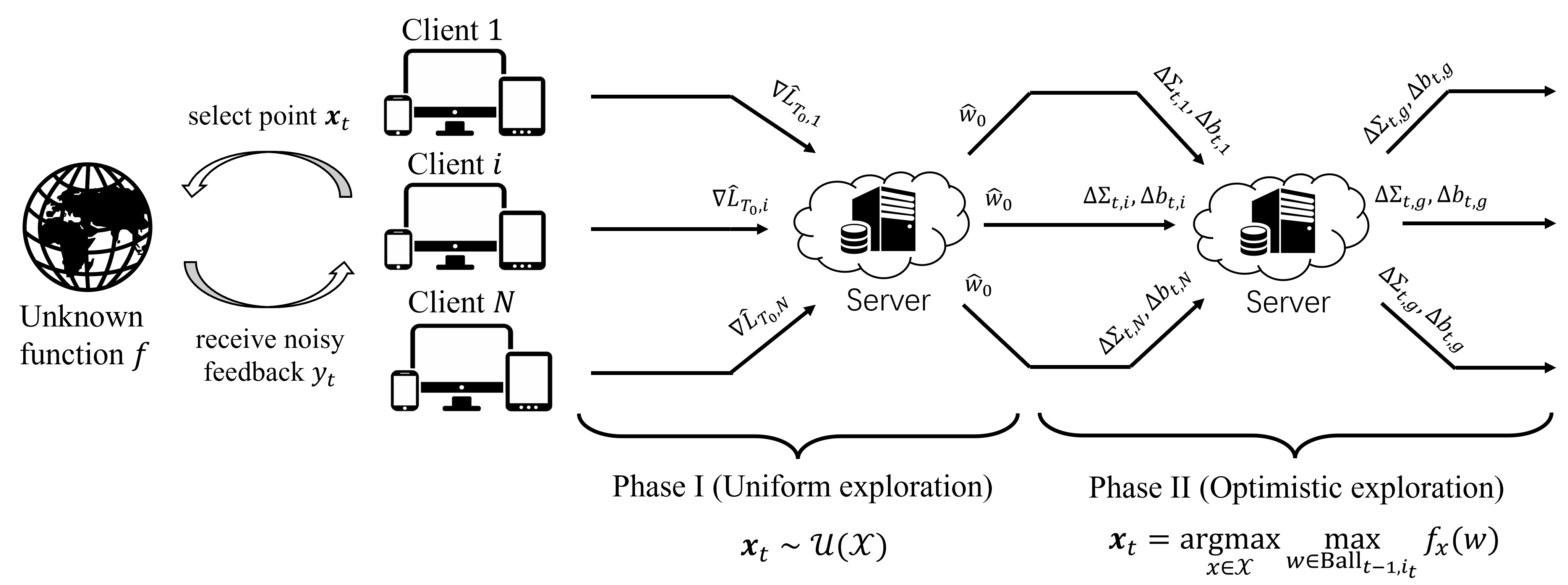}
\caption{Illustration of Fed-GO-UCB algorithm, which consists of two phases: in Phase I, all clients do uniform exploration for a total number of $T_{0}$ time steps, and then use the collected data to jointly construct an model $\hat{w}_{0}$ via iterative distributed optimization; in Phase II, each client constructs a local confidence set for the unknown non-linear function $f$ using gradients w.r.t. the shared model $\hat{w}_{0}$, based on which they do optimistic exploration. Synchronization of their local statistics happens when a communication event is triggered, which enables coordinated exploration among the clients.}
\label{fig:fedgoucb_illustration}
\end{figure}

However, despite these potential benefits and the compelling theoretical guarantees,
prior works in this direction are limited to very restrictive function classes, e.g., linear \citep{wang2020distributed,li2022asynchronous,he2022simple}, generalized linear \citep{li2022glb}, and non-parametric function class with bounded RKHS norm \citep{li2022kernel,lilearning,du2021collaborative}, which limits their potential in practical scenarios that typically require more powerful tools in nonlinear modeling, e.g. neural networks. 
The main challenges in bridging this gap come from two aspects. 
First, different from offline federated optimization,
federated bandit optimization needs to efficiently explore the decision space by actively picking data points to evaluate. This requires a careful construction of confidence sets for the unknown optimal model parameter \citep{abbasi2011improved}, which is challenging for generic nonlinear functions. 
Second, for clients to
collaboratively estimate confidence sets, occasional communications are required to aggregate their local learning parameters as new data points are collected over time. Prior works consider simple function classes \citep{wang2020distributed,li2022asynchronous}, so efficient communication can be realized by directly aggregating local sufficient statistics for the closed-form model estimation. However, generic non-linear function places a much higher burden on the communication cost, as iterative optimization procedure is required.

To address these challenges, we propose the Federated Global Optimization with Upper Confidence Bound (Fed-GO-UCB) algorithm, as illustrated in Figure \ref{fig:fedgoucb_illustration}. 
Specifically, Fed-GO-UCB has two phases: Phase I does uniform exploration to sufficiently explore the unknown function; and Phase II does optimistic exploration to let $N$ clients jointly optimize the function. All clients separately choose which points to evaluate, and only share statistics summarizing their local data points with the central server. Details of Fed-GO-UCB are presented in Section \ref{subsec:algorithm}.



\textbf{Technical novelties.} 
Our core technique to address the aforementioned challenges is a novel confidence sets construction that works for generic nonlinear functions, and more importantly, can be updated communication efficiently during federated bandit optimization.
Our construction is motivated by \citet{liu2023global},
with non-trivial extensions tailored to federated bandit setting.
Specifically, the statistics used for function approximation in \citet{liu2023global} is computed based on a single sequence of continuously updated models, but in federated bandits, each client has a different sequence of locally updated models. Direct aggregation of such local statistics does not necessarily lead to valid confidence sets.
Instead, we propose a new approximation procedure, such that all statistics are computed based on the same fixed model shared by all clients, which is denoted by $\hat{w}_{0}$. With improved analysis, we show that valid confidence sets can still be constructed. More importantly, this allows direct aggregation of statistics from different clients, so that communication strategies proposed in linear settings can be utilized to reduce frequency of communications. 
Over the entire horizon, only the estimation of $\hat{w}_{0}$ at the end of Phase I requires iterative optimization.
To further control the communication cost it incurs, we show 
that $\hat{w}_{0}$ only needs to be an $O(1/\sqrt{NT})$-approximation to the empirical risk minimizer that is required in the analysis of \citet{liu2023global}, and adopt a distributed implementation of Gradient Langevin Dynamics (GLD) for its optimization.

\textbf{Contributions.} Our main contributions can be summarized as follows.
\begin{enumerate}
    \item To the best of our knowledge, Fed-GO-UCB is the first federated bandit algorithm for generic non-linear function optimization with provable communication efficiency, making it highly deployment-efficient.
    \item Under realizable assumption and some other mild conditions, we prove that cumulative regret of Fed-GO-UCB is $\tilde{O}(\sqrt{NT})$ and its communication cost is $\tilde{O}(N^{1.5}\sqrt{T})$. Both of two results are sublinear in time horizon $T$.
    \item Our empirical evaluations show Fed-GO-UCB outperforms existing federated bandit algorithms, which demonstrates the effectiveness of generic non-linear function optimization,
\end{enumerate}



\section{Related Work}\label{sec:rw}

\paragraph{Centralized global optimization} 
Most work on global optimization studies the centralized setting where all data points are available on a single machine.
Its applications include hyperparameter tuning for deep neural networks \citep{hazan2018hyperparameter,kandasamy2020tuning} and materials design \citep{nakamura2017design,frazier2016bayesian}. The most popular approach to this problem is Bayesian optimization (BO) \citep{shahriari2015taking,frazier2018tutorial}, which is closely related to bandit problems \citep{li2019nearly,foster2020beyond}.
BO typically assumes the unknown objective function is drawn from some Gaussian Processes (GP). The learner sequentially choose points to evaluate and then improve its estimation via posterior update. Classical BO algorithms include GP-UCB \citep{srinivas2010gaussian}, GP-EI \citep{jones1998efficient}, and GP-PI \citep{kushner1964new}. 
To improve heterogeneous modeling of the objective function and mitigate over-exploration, Trust region BO \citep{eriksson2019scalable} that uses multiple local optimization runs is proposed.
In this line of research, the closes work to ours is \citet{liu2023global}, which also considers global approximation of generic nonlinear functions, though it's not suitable for federated setting as discussed in Section \ref{sec:intro}.



\paragraph{Federated bandit optimization}
Another closely related line of research is federated/distributed bandits, where multiple agents collaborate in pure exploration \citep{hillel2013distributed,tao2019collaborative,du2021collaborative}, or regret minimization \citep{wang2020distributed,li2022asynchronous,li2022glb}.
However, most of these works make linear model assumptions, and thus the clients can efficiently collaborate by transferring the $O(d_{x}^{2})$ sufficient statistics for closed-form model estimation, where $d_{x}$ is input data dimension.
The closest works to ours are \citet{wang2020distributed,dubey2020differentially,li2022asynchronous,li2022glb}, which uses event-triggered communication strategies to obtain sub-linear communication cost over time, i.e., communication only happens when sufficient amount of new data has been collected.
There is also recent work by \citet{dai2022federated} that studies federated bandits with neural function approximation, but it still relies on GP with a Neural Tangent Kernel in their analysis, which is intrinsically linear. More importantly, this analysis assumes the width of the neural network is much larger than the number of samples, while our results do not require such over-parameterization.


\section{Preliminaries}\label{sec:pre}

We consider the problem of finding a global maximum solution to an unknown non-linear black-box function $f$, i.e.,
\begin{align*}
\bx^*=\argmax_{\bx \in \cX}f(\bx).
\end{align*}
Different from previous works, we consider a decentralized system of 1) $N$ clients that selects data points to evaluate, and 2) a central server that coordinates the communication among the clients. The clients cannot directly communicate with each other, but only with the central server, i.e., a star-shaped communication network as shown in Figure \ref{fig:fedgoucb_illustration}.
In each round, $N$ clients interact with the unknown function $f$ in a round-robin manner, for a total number of $T$ rounds, so the total number of interactions is $NT$. Let $[N]$ denote the integer set $\{1, 2,..., N\}$.
Specifically, at round $l \in [T]$, each client $i \in [N]$ selects a point $\bx_{t}$ from the set $\cX$, and has a zeroth-order noisy function observation:
\begin{equation}\label{eq:reward_function}
    y_{t}=f(\bx_{t}) + \eta_{t} \in \R,
\end{equation}
where the subscript $t:=N (l-1) + i$ indicates this is the $t$-th interaction between the system and the function $f$, i.e., the $t$-th time function $f$ is evaluated at a selected point $\bx_{t}$, and $\eta_{t}$ is independent, zero-mean, $\sigma$-sub-Gaussian noise, for $t \in [NT]$.

We adopt the classical definition of cumulative regret to evaluate the algorithm performance. It is defined as $R_{NT}=\sum_{t=1}^{NT} f(\bx^*) - f(\bx_t)$ for $NT$ interactions. Following \citet{wang2020distributed}, we define the communication cost $C_{NT}$ as the total amount of real numbers being transferred across the system during the $NT$ interactions with function $f$.




Let $\cU$ denote uniform distribution. W.l.o.g., let $\cX \subseteq [0, 1]^{d_x}$ and $\cY \subset \R$ denote the domain and range of unknown function $f$. We are working with a parametric function class $\cF := \{f_{w}: \cX \rightarrow \cY|w \in \cW\}$ to approximate $f$ where the parametric function class is controlled by the parameter space $\cW$. For a parametric function $f_w(\bx)$, let $\nabla f_w(\bx)$ denote the gradient taken w.r.t. $\bx$ and $\nabla f_\bx(w)$ denote the gradient taken w.r.t. $w$.
We use $i_{t} \in [N]$ as the index of the client that evaluates point $x_{t}$ at time step $t$.
We denote the sequence of time steps corresponding to data points that have been evaluated by client $i$ up to time $t$ as $D^{l}_{t,i}=\left\{1 \leq s \leq t: i_{s}=i \right\}$ 
for $t=1,2,\dots,NT$. 
In addition, we denote the sequence of time steps corresponding to the data points that have been used to update client $i$'s local model as $D_{t,i}$, which include both data points collected locally and those received from the server. If client $i$ never receives any communication from the server, $D_{t,i}=D^{l}_{t,i}$; otherwise, $D^{l}_{t,i} \subset D_{t,i} \subseteq [t]$. Moreover, when each new data point evaluated by any client in the system is readily communicated to all the other clients, we recover the centralized setting, i.e., $D_{t,i}=[t], \forall i,t$.
For completeness, we list all notations in Appendix \ref{sec:table}.

Here we list all assumptions that we use throughout this paper. The first two assumptions are pretty standard and the last assumption comes from previous works.
\begin{assumption}[Realizabilty] \label{assump:1}
There exists $w^{\star} \in \mathcal{W}$ such that the unknown objective function $f=f_{w^{\star}}$. Also, assume $\mathcal{W} \subset [0,1]^{d_{w}}$. This is w.l.o.g. for any compact $\mathcal{W}$.
\end{assumption}

\begin{assumption}[Bounded, differentiable and smooth function approximation] \label{assump:2}
There exist constants $F,C_{g},C_{h}>0$, s.t. $|f_{x}(w)|\leq F$, $\lVert \nabla f_{x}(w) \rVert_{2} \leq C_{g}$, and $\lVert \nabla^{2} f_{x}(w) \rVert_{\mathrm{op}} \leq C_{h}$, $\forall x \in \mathcal{X}, w\in \mathcal{W}$.
\end{assumption}
\begin{assumption}[Geometric conditions on the loss function \citep{liu2023global,xu2018global}] \label{assump:3}
    $L(w)=\E_{x \sim \cU}(f_{x}(w)-f_{x}(w^{\star}))^{2}$ satisfies $(\tau,\gamma)$-growth condition or local $\mu$-strong convexity at $w^{\star}$, i.e., $\forall w \in \cW$,
    \begin{align*}
        \min \left\{ \frac{\mu}{2}\lVert w-w^{\star} \rVert_{2}^{2}, \frac{\tau}{2} \lVert w-w^{\star} \rVert_{2}^{\gamma} \right\} \leq L(w) - L(w^{\star}),
    \end{align*}
for constants $\mu, \tau >0, \mu < d_w, 0 < \gamma <2$. $L(w)$ satisfies dissipative assumption, i.e., $\nabla L(w)^{\top}w \geq C_{i} \lVert w \rVert_{2}^{2}-C_{j}$ for some constants $C_{i}, C_{j} > 0$, and a $c$-local self-concordance assumption at $w^{\star}$, i.e., $(1-c)^{2} \nabla^{2}L(w^{\star}) \preceq \nabla^{2}L(w^{\star}) \preceq (1-c)^{-2} \nabla^{2}L(w^{\star})$,
$\forall w$ s.t. $\lVert w-w^{\star} \rVert_{\nabla^{2}L(w)} \leq c$.


\end{assumption}

Assumption \ref{assump:2} implies there exists a constant $\zeta > 0$, such that $\lVert \nabla^{2} L(w^{\star}) \rVert_{op} \leq \zeta$. For example, it suffices to take $\zeta=2C_{g}^{2}$.
Assumption \ref{assump:3} is made on the expected loss function $L$ w.r.t. parameter $w$ rather than function $f$, and is strictly weaker than strong convexity as it only requires strong convexity in the small neighboring region around $w^*$. For $w$ far away from $w^*$, $L(w)$ can be highly non-convex since only growth condition needs to be satisfied. The dissipative assumption is typical for stochastic differential equation and diffusion approximation \citep{raginsky2017non, zhang2017hitting}. In our paper, it is only needed for the convergence analysis of Algorithm \ref{alg:gld_update}, and may be relaxed by adopting other non-convex optimization methods with global convergence guarantee.

\section{Methodology}\label{sec:methodology}

\subsection{Fed-GO-UCB Algorithm}\label{subsec:algorithm}
In this paper, we develop an algorithm that allows Bayesian optimization style active queries to work for general supervised learning-based function approximation under federated optimization scheme. 

\begin{algorithm}[!htbp]
\caption{Fed-GO-UCB}
\label{alg:fed_go_ucb}
{\bf Input:}
Phase I length $T_0$, Time horizon (Phase II length) $NT$, \textit{Oracle}, number of iterations $n$ to execute \textit{Oracle}, communication threshold $\gamma$, 
regularization weight $\lambda$.\\
{\bf Phase I} (Uniform exploration)
\begin{algorithmic}[1]
\STATE \textbf{for} $t=1,2,\dots,T_{0}$ \textbf{do} client $i_{t} \in [N]$ evaluates point $x_{t} \sim \cU(\cX)$, and observe $y_t$
\STATE Execute \text{\textit{Oracle}} (e.g., Algorithm \ref{alg:gld_update}) over $N$ clients to obtain $\hat{w}_0$
\end{algorithmic}
{\bf Phase II} (Optimistic exploration)
\begin{algorithmic}[1]
\STATE Initialize $\hat{w}_{T_{0},i}=\hat{w}_{0}$, $\Sigma_{T_{0},i}=\lambda \mathbf{I}, b_{T_{0},i}= \mathbf{0}$, $D_{T_{0},i}=\emptyset$, for each client $i \in [N]$ 
\FOR{$t=1,\dots,NT$}
    \STATE Client $i_{t}$ evaluates point $\bx_t=\argmax_{\bx \in \cX} \max_{w \in \mathrm{Ball}_{t-1,i_{t}}} f_\bx(w)$ and observe $y_{t}$
    \STATE Client $i_{t}$ updates $\Sigma_{t,i_t} = \Sigma_{t-1,i_t} + \nabla f_{\bx_t} (\hat{w}_0) \nabla f_{\bx_t}(\hat{w}_0)^\top$, $b_{t,i_{t}} = b_{t-1,i_{t}} + \nabla f_{\bx_t} (\hat{w}_0)(\nabla f_{\bx_t}(\hat{w}_0)^\top \hat{w}_{0} + y_t - f_{\bx_t})$, and $\Delta \Sigma_{t,i_t} = \Delta\Sigma_{t-1,i_t} + \nabla f_{\bx_t}(\hat{w}_0) \nabla f_{\bx_t}(\hat{w}_0)^\top$, $\Delta b_{t,i_{t}} = \Delta b_{t-1,i_{t}} + \nabla f_{\bx_t} (\hat{w}_0)\nabla f_{\bx_t}(\hat{w}_0)^\top \hat{w}_{0} + y_t - f_{\bx_t}$
    \STATE Client $i_{t}$ updates $\hat{w}_{t,i_{t}}$ by \eqref{eq:w_t} and $\mathrm{Ball}_{t,i_{t}}$ by \eqref{eq:ball}; sets index set $D_{t,i_{t}}=D_{t-1,i_{t}} \cup \{t\}$

    \textit{\# Check whether global synchronization is triggered}
    \IF{$\bigl(|\cD_{t,i_{t}}|- |\cD_{t_{\text{last}},i_{t}}|\bigr)\log{\frac{\det(\Sigma_{t,i_{t}})}{\det(\Sigma_{t,i_{t}}-\Delta \Sigma_{t,i_{t}})}} > \gamma$}
        \STATE All clients upload $\{\Delta \Sigma_{t,i},\Delta b_{t,i}\}$ for $i=1,\dots,N$ to server
        \STATE Server aggregates $\Sigma_{t,g}=\Sigma_{t_{\text{last}},g}+\sum_{i=1}^{N}\Delta \Sigma_{t,i},b_{t,g}=b_{t_{\text{last}},g}+\sum_{i=1}^{N}\Delta b_{t,i}$
        \STATE All clients download $\{\Sigma_{t,g},b_{t,g}\}$, and update $\Sigma_{t,i}=\Sigma_{t,g},b_{t,i}=b_{t,g}$ for $i=1,\dots,N$
        \STATE Set $t_{\text{last}}=t$
    \ENDIF
\ENDFOR
\end{algorithmic}
{\bf Output:}
$\hat{\bx} \sim \cU (\{\bx_1, ..., \bx_T\})$.
\end{algorithm}

\paragraph{Phase I}
In Step 1 of Phase I, the algorithm evaluates the unknown function $f$ at uniformly sampled points for a total number of $T_{0}$ times, where $T_{0}$ is chosen to be large enough such that function $f$ can be sufficiently explored. By the end of time step $T_{0}$, each client $i \in [N]$ has collected a local dataset $\{(\bx_{s},y_{s})\}_{s\in D^{l}_{T_{0},i}}$ (by definition $\cup_{i\in [N]}D^{l}_{T_{0},i}=[T_{0}]$). 
In Step 2, we call the distributed regression oracle, denoted as \textit{Oracle}, to jointly learn model parameter $\hat{w}_{0}$, by optimizing \eqref{eq:phase1_opt} below. 
\begin{equation} \label{eq:phase1_opt}
    \min_{w \in \cW} \hat{L}_{T_{0}}(w) := \frac{1}{T_{0}}\sum_{i=1}^{N} \hat{L}_{T_{0},i}(w),
\end{equation}
where $\hat{L}_{T_{0},i}(w)=\sum_{s \in D^{l}_{T_{0},i}} \bigl[ y_{s}-f_{w}(\bx_{s}^{\top}) \bigr]^{2}$ is the squared error loss on client $i$'s local dataset. 
It is worth noting that, executing \textit{Oracle} to compute the exact minimizer $\hat{w}^{\star}_{0}:=\argmin_{w \in \cW} \hat{L}_{T_{0}}(w)$ as in the prior work by \cite{liu2023global} is unreasonable in our case, since it requires infinite number of iterations, which leads to infinite communication cost. Instead, we need to relax the requirement by allowing for an approximation error $\epsilon$, such that \textit{Oracle} only need to output $\hat{w}_{0}$ that satisfies
\begin{equation} \label{eq:approximation_error_w0}
    |\hat{L}_{T_{0}}(\hat{w}_{0})-\hat{L}_{T_{0}}(\hat{w}^{\star}_{0})| \leq \epsilon.
\end{equation}
\textit{Oracle} can be any distributed non-convex optimization method with global convergence guarantee, i.e., we can upper bound the number of iterations required, denoted as $n$, to attain $\epsilon$, for some $\epsilon \geq 0$. 
\begin{remark}\label{rmk:convergence_gld}
In this paper we adopt Gradient Langevin Dynamics (GLD) based methods to optimize \eqref{eq:phase1_opt}, which are known to have global convergence guarantee to the minimizer of non-convex objectives under smooth and dissipative assumption (Assumption \ref{assump:2} and Assumption \ref{assump:3}). GLD based methods work by introducing a properly scaled isotropic Gaussian noise to the gradient descent update at each iteration, which allows them to escape local minima.
Specifically, we use a distributed implementation of GLD, which is given in Algorithm \ref{alg:gld_update}. It requires $n=O(\frac{d_{x}}{\epsilon \nu}\cdot \log(\frac{1}{\epsilon}))$ iterations to attain $\epsilon$ approximation (with step size set to $\tau_{1} \leq \epsilon$), where $\nu=O(e^{-\tilde{O}(d_{x})})$ denotes the spectral gap of the discrete-time Markov chain generated by GLD (Theorem 3.3 of \cite{xu2018global}).
\end{remark}
\begin{algorithm}[h]
    \caption{$\quad \text{Distributed-GLD-Update}$} \label{alg:gld_update}
  \begin{algorithmic}[1]
    \STATE \textbf{Input:} total iterations $n$; step size $\tau_{1}>0$; inverse temperature parameter $\tau_{2}>0$; $w^{(0)}=\textbf{0}$
    \FOR{$k=0,1,\dots,n-1$}
        \STATE Server sends $w^{(k)}$ to all clients, and receives local gradients $\nabla \hat{L}_{T_{0},i}(w^{(k)})$ for $i\in [N]$ back
        \STATE Server aggregates local gradients $\nabla \hat{L}_{T_{0}}(w^{(k)})=\frac{1}{T_{0}}\sum_{i=1}^{N} \nabla \hat{L}_{T_{0},i}(w^{(k)})$
        \STATE Server randomly draws $z_{k} \sim \cN(\textbf{0}, \textbf{I}_{d\times d})$
        \STATE Server computes update $w^{(k+1)}=w^{(k)}-\tau_{1} \nabla \hat{L}_{T_{0}}(w^{(k)}) + \sqrt{2\tau_{1}/\tau_{2}} z_{k}$
    \ENDFOR
    \STATE \textbf{Output:} $\hat{w}_{0}=w^{(K)}$
  \end{algorithmic}
\end{algorithm}

\paragraph{Phase II}
At the beginning of Phase II, the estimator $\hat{w}_{0}$ obtained in Phase I is sent to each client, which will be used to construct the confidence sets about the unknown parameter $w^{\star}$.
Specifically, at each time step $t=1,2,\dots,NT$, the confidence set constructed by client $i_{t}$ is a ball defined as
\begin{equation}
    \text{Ball}_{t,i_{t}} := \{w: \lVert w - \hat{w}_{t,i_{t}} \rVert^{2}_{\Sigma_{t,i_{t}}} \leq \beta_{t,i_{t}}\},\label{eq:ball}
\end{equation}
such that with the choice of $\beta_{t,i_{t}}$ in Lemma \ref{lem:feasible_ball}, $w^{\star} \in \text{Ball}_{t,i_{t}}, \forall t$ with probability at least $1-\delta$.
The center of this ball is defined as 
\begin{align}
    \hat{w}_{t,i_{t}}:= \Sigma^{-1}_{t,i_{t}} b_{t,i_{t}} + \lambda \Sigma^{-1}_{t,i_{t}} \hat{w}_{0},\label{eq:w_t}
\end{align}
where the matrix 
\begin{align}
\Sigma_{t,i_{t}}:=\lambda \mathbf{I} + \sum_{s \in D_{t,i_{t}}} \nabla f_{\bx_s}(\hat{w}_{0}) \nabla f_{\bx_s}(\hat{w}_{0})^\top,\label{eq:sigma_t}
\end{align}
and vector $b_{t,i_{t}}:=\sum_{s \in D_{t,i_{t}}} \nabla f_{\bx_s}(\hat{w}_{0}) \bigl[\nabla f_{\bx_s}(\hat{w}_{0})^\top \hat{w}_{0} + y_s - f_{\bx_s}(\hat{w}_{0}) \bigr]$.
Note $\nabla f_{\bx_s}(\hat{w}_0)$ is the gradient of our parametric function taken w.r.t parameter $\hat{w}_0$, rather than the gradient of the unknown objective function. The statistics $\Sigma_{t,i},b_{t,i}$ for all client $i$ and time $t$ are constructed using gradients w.r.t. the same model $\hat{w}_{0}$. This is essential in federated bandit optimization, as it allows the clients to jointly construct the confidence set by directly aggregating their local updates, denoted by $\Delta \Sigma_{t,i}, \Delta b_{t,i}$. In comparison, although the statistics used to construct the confidence sets in \citet{liu2023global} are computed based on different models and lead to tigher results, they impede collaboration among clients and cannot be directly used in our case.

In Phase II, exploration is conducted following “Optimism in the Face of Uncertainty (OFU)” principle, i.e., at time step $t$, client $i_{t}$ selects point $\bx_{t} \in \cX$ to evaluate via joint optimization over $x \in \cX$ and $w \in \text{Ball}_{t-1,i_{t}}$, as shown in line 3. The newly obtained data point $(\bx_{t},y_{t})$ will then be used to update client $i_{t}$'s confidence set as shown in line 4-5.
In order to ensure communication efficiency during the collaborative global optimization across $N$ clients, an event-triggered communication protocol is adopted, as shown in line 6. Intuitively, this event measures the amount of new information collected by client $i_{t}$ since last global synchronization. If the value of LHS exceeds threshold $\gamma$, another global synchronization will be triggered, such that the confidence sets of all $N$ clients will be synchronized as shown in line 7-9. In Section \ref{sec:proof}, we will show that with proper choice of $\gamma$, the total number of synchronizations can be reduced to $\tilde{O}(\sqrt{N})$, without degrading the performance.

\subsection{Theoretical Results}\label{subsec:theory}
\begin{theorem}[Cumulative regret and communication cost of Fed-GO-UCB]\label{thm:regret_communication}
Suppose Assumption \ref{assump:1}, \ref{assump:2}, and \ref{assump:3} hold. Let $C$ denote a universal constant and $C_{\lambda}$ denote a constant that is independent to $T$. Under the condition that 
$T \geq \frac{C d_{w}^{2}F^{4}\iota^{2}}{N} \cdot \max\bigl\{ \frac{\mu^{\gamma/(2-\gamma)} }{\tau^{2/(2-\gamma)}}, \frac{\zeta}{\mu c^{2}} \bigr\}^{2}$,
where $\iota$ is the logarithmic term depending on $T_{0}$, $C_{h}$, and $2/\delta$. Algorithm \ref{alg:fed_go_ucb} with parameters $T_{0}=\sqrt{NT}$, $\lambda=C_{\lambda}\sqrt{NT}$, $\gamma=\frac{d_{w} F^{4} T}{\mu^{2}N}$, and $\epsilon \leq \frac{8}{C \sqrt{NT}}$, has cumulative regret
\begin{align*}
    R_{\text{phase I + phase II}} = \tilde{O} \left(\sqrt{NT}F + \sqrt{NT} d_{w}^{3}F^{4}/\mu^{2} + N d_{w}^{4}F^{4}/\mu^{2} \right),
\end{align*}
with probability at least $1-\delta$, and communication cost
\begin{align*}
    C_{\text{phase I + phase II}} = \tilde{O}(N^{1.5} \sqrt{T}d_{w}d_{x}\exp{(d_{x})}+N^{1.5} d_{w}^{2}\mu/F^{2} ).
\end{align*}
\end{theorem}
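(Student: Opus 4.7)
}

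The plan is to decompose the total regret as $R_{\text{phase I + phase II}} = R_{\text{phase I}} + R_{\text{phase II}}$ and bound each term separately, and to do the analogous decomposition for the communication cost $C_{\text{phase I + phase II}}$. For Phase I, uniform exploration for $T_{0}=\sqrt{NT}$ rounds trivially incurs instantaneous regret at most $2F$ per step by Assumption \ref{assump:2}, giving $R_{\text{phase I}}=O(\sqrt{NT}\,F)$. Its communication cost arises entirely from running the distributed GLD \textit{Oracle}: by Remark \ref{rmk:convergence_gld} we need $n=\tilde O(d_{x}\exp(d_{x})/\epsilon)$ iterations to reach accuracy $\epsilon=O(1/\sqrt{NT})$, and each iteration exchanges $O(d_{w})$ reals with each of the $N$ clients, yielding $\tilde O(N^{1.5}\sqrt{T}\,d_{w}d_{x}\exp(d_{x}))$.

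For Phase II, I would first invoke Lemma \ref{lem:feasible_ball} to assert that with probability $\ge 1-\delta$, the confidence ball $\text{Ball}_{t,i_{t}}$ contains $w^{\star}$ at every round $t\in[NT]$, with a radius $\sqrt{\beta_{t,i_{t}}}$ that depends only logarithmically on $T$ and accounts for: (i) the sub-Gaussian noise, (ii) the Hessian-based linearization error $\tfrac12\|w-\hat w_{0}\|^{2}\|\nabla^{2}f_{\bx}(\cdot)\|_{\text{op}}$ introduced by computing $\Sigma_{t,i_{t}}$ and $b_{t,i_{t}}$ using gradients at the single fixed model $\hat w_{0}$ instead of along a trajectory, and (iii) the $\epsilon$-approximation slack in $\hat w_{0}$. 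The condition $T\geq \frac{C d_{w}^{2}F^{4}\iota^{2}}{N}\max\{\cdot\}^{2}$ and $T_{0}=\sqrt{NT}$ together force $\|\hat w_{0}-w^{\star}\|_{2}$ to lie inside the self-concordance neighborhood in Assumption \ref{assump:3}, which allows the linearization remainder to be absorbed. Under the OFU rule of line~3, the instantaneous regret at round $t$ is bounded, via a Taylor expansion of $f_{\bx}(\cdot)$ around $\hat w_{0}$, by $2\sqrt{\beta_{t,i_{t}}}\|\nabla f_{\bx_{t}}(\hat w_{0})\|_{\Sigma_{t-1,i_{t}}^{-1}}$ plus a second-order term controlled by $C_{h}\|w^{\star}-\hat w_{0}\|^{2}$.

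Next I would handle the federated delay: because each local $\Sigma_{t,i_{t}}$ is based on the index set $D_{t,i_{t}}$, which may differ from $[t]$, I would use the event-triggered test in line~6 exactly as in \citet{wang2020distributed}: the determinant-ratio $\log\det(\Sigma_{t,i_{t}})/\det(\Sigma_{t,i_{t}}-\Delta\Sigma_{t,i_{t}})$ stays $\le\gamma/|\cD_{t,i_{t}}\setminus \cD_{t_{\text{last}},i_{t}}|$ before synchronization, which upper-bounds the mismatch between the local matrix and the hypothetical fully-synchronized one by a factor independent of $t$. Summing the widths using the standard elliptical potential lemma then gives $\sum_{t}\|\nabla f_{\bx_{t}}(\hat w_{0})\|_{\Sigma_{t-1,i_{t}}^{-1}}^{2}=\tilde O(d_{w})$, so Cauchy-Schwarz yields the $\tilde O(\sqrt{NT}\,d_{w}^{3}F^{4}/\mu^{2})$ leading term; the $\tilde O(Nd_{w}^{4}F^{4}/\mu^{2})$ term comes from combining the $O(NT\cdot\|\hat w_{0}-w^{\star}\|^{2})$ linearization residual with $\|\hat w_{0}-w^{\star}\|^{2}=\tilde O(d_{w}/\sqrt{NT})$ obtained from Assumption \ref{assump:3} applied to the Phase I estimator.

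For the Phase II communication cost, the key step is bounding the number $P$ of synchronization events. A pigeonhole-style argument on the event-triggering rule, summed across all $N$ clients and combined with $\sum_{t}\log\det(\Sigma_{t,i_{t}})/\det(\Sigma_{t-1,i_{t}})=\tilde O(d_{w})$, yields $P=\tilde O(\sqrt{NT/\gamma})=\tilde O(\sqrt{N}\mu/F^{2})$ for the chosen $\gamma=d_{w}F^{4}T/(\mu^{2}N)$; each sync transmits $O(Nd_{w}^{2})$ reals, giving $\tilde O(N^{1.5}d_{w}^{2}\mu/F^{2})$. The main obstacle I anticipate is step (ii) of the Phase II regret argument: one must show that the confidence ball built from gradients at the single anchor $\hat w_{0}$ (rather than along the estimation trajectory as in \citet{liu2023global}) is still valid, and that the resulting inflation of $\beta_{t,i_{t}}$ is only polylogarithmic under the parameter choices $T_{0}=\sqrt{NT}$ and $\lambda=C_{\lambda}\sqrt{NT}$; this is precisely where the self-concordance clause of Assumption \ref{assump:3} and the $\epsilon=O(1/\sqrt{NT})$ requirement on the oracle must be used carefully.
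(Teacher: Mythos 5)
Your overall decomposition matches the paper's: Phase I contributes $O(\sqrt{NT}F)$ regret and $2nNd_w$ communication from the GLD oracle, Phase II is handled via Lemma \ref{lem:feasible_ball}, an instantaneous-regret bound of the form $2\sqrt{\beta_{t-1,i_t}}\|\nabla f_{\bx_t}(\hat w_0)\|_{\Sigma^{-1}_{t-1,i_t}}+2\beta_{t,i_t}C_h/\lambda$, the elliptical potential lemma, and a pigeonhole count of synchronizations. The Phase I analysis and the Phase II communication-cost analysis are essentially the paper's (modulo a slip: the number of synchronizations is $P\leq 2\sqrt{TR/\gamma}$ with $R=\tilde O(d_w)$, not $\tilde O(\sqrt{NT/\gamma})$; you happen to land on the correct $\tilde O(\sqrt{N}\mu/F^2)$ anyway).

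There is, however, a genuine gap in your Phase II regret argument: the handling of the staleness of the local statistics. You claim that the event-trigger quantity $\log\bigl(\det(\Sigma_{t,i_t})/\det(\Sigma_{t,i_t}-\Delta\Sigma_{t,i_t})\bigr)$ "upper-bounds the mismatch between the local matrix and the hypothetical fully-synchronized one." It does not: that ratio only measures how much client $i_t$'s \emph{own} new data has changed its local matrix since the last synchronization, whereas the mismatch with the centralized matrix $\Sigma_{t-1}$ is driven by the data collected by the \emph{other} $N-1$ clients that client $i_t$ has not yet received, which the trigger does not monitor. The paper (following \citet{wang2020distributed}) closes this gap by partitioning the epochs between synchronizations into good epochs, where $\ln\bigl(\det(\Sigma_{t_p})/\det(\Sigma_{t_{p-1}})\bigr)\leq 1$ so that $\det(\Sigma_{t-1})/\det(\Sigma_{t-1,i_t})\leq e$ and each local confidence width is within a factor $\sqrt{e}$ of the centralized one, and bad epochs, of which there are at most $R=\tilde O(d_w)$ by pigeonhole on $\ln\det(\Sigma_{NT})$, and whose per-epoch regret $N\sqrt{16\beta_{NT}\gamma+8\beta_{NT}^2C_h^2/C_\lambda^2}$ is bounded using the trigger applied to each client's own increment. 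This decomposition is also where the $\tilde O(Nd_w^4F^4/\mu^2)$ term actually originates ($R$ bad epochs times $O(N\beta_{NT})$ each), not from the sum $O(NT\cdot\|\hat w_0-w^\star\|^2)$ of linearization residuals as you assert — that sum is $NT\cdot\beta_{NT}C_h/\lambda=\tilde O(\sqrt{NT}d_w^3F^4/\mu^2)$ and is absorbed into the leading term. Without the good/bad epoch split your argument cannot justify either the leading $\sqrt{NT}$ term (the elliptical potential sum is over the centralized matrices, and you need a uniform comparison to pass to the local ones) or the additive $N$-dependent term.
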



Theorem \ref{thm:regret_communication} shows that our proposed Fed-GO-UCB algorithm matches the regret upper bound of its centralized counterpart, GO-UCB algorithm by \citet{liu2023global}, while only requiring communication cost that is sub-linear in $T$.
We should note that the $O(\sqrt{T})$ dependence in communication cost is due to the iterative optimization procedure to compute $\hat{w}_{0}$ at the end of Phase I, which also exists for prior works in federated bandits that requires iterative optimization \citep{li2022glb}.



\section{Proof Overview}\label{sec:proof}
In this section, to highlight our technical contributions, we provide a proof sketch of the theoretical results about cumulative regret and communication cost that are presented in Theorem \ref{thm:regret_communication}. All auxiliary lemmas are given in Appendix \ref{sec:auxiliary} and complete proofs are presented in Appendix \ref{sec:full_proof}.

\subsection{Phase I. Uniform Exploration \& Distributed Regression Oracle} \label{subsec:theory_phaseI}


\paragraph{Cumulative Regret and Communication Cost in Phase I}
Recall from Section \ref{subsec:algorithm} that, $N$ clients conduct uniform exploration in Phase I, which constitutes a total number of $T_{0}$ interactions with the environment. By Assumption \ref{assump:2}, we know that the instantaneous regret has a unform upper bound $r_{t} := f(\bx^{\star})-f(\bx_{t}) \leq 2 F$, so for a total number of $T_{0}$ time steps, the cumulative regret incurred in Phase I, denoted as $R_{\text{phase I}}$, can be upper bounded by $R_{\text{phase I}} = \sum_{t=1}^{T_{0}} r_{t} \leq 2 T_{0}F$. Choice of $T_{0}$ value will be discussed in Section \ref{subsec:theory_phaseII}, as it controls the quality of $\hat{w}_{0}$, which further affects the constructed confidence sets used for optimistic exploration. 

Moreover, the only communication cost in Phase I is incurred when executing \textit{Oracle}, i.e., the distributed regression oracle, for $n$ iterations. In each iteration, $N$ clients need to upload their local gradients to the server, and then receive the updated global model back (both with dimension $d_{w}$). Therefore, the communication cost incurred during Phase I is $C_{\text{phase I}} = 2 n N d_{w}$.

\paragraph{Distributed Regression Oracle Guarantee}
At the end of Phase I, we obtain an estimate $\hat{w}_{0}$ by optimizing \eqref{eq:phase1_opt} using \textit{Oracle}. As we mentioned in Section \ref{subsec:theory}, $\hat{w}_{0}$ will be used to construct the confidence sets, and thus to prepare for our analysis of the cumulative regret in Phase II, we establish the following regression oracle lemmas.

\begin{lemma} \label{lem:expected_error_covering_arguments}
There is an absolute constant $C^{\prime}$, such that after time step $T_{0}$ in Phase I of Algorithm \ref{alg:fed_go_ucb} and under the condition that approximation error $\epsilon \leq 1/(C^{\prime} T_{0})$, with probability at least $1-\delta/2$, regression oracle estimated $\hat{w}_{0}$ satisfies 
$\E_{x \sim \cU} [(f_{\bx}(\hat{w}_{0})-f_{\bx}(w^{\star}))^{2}] \leq C^{\prime} d_{w} F^{2} \iota/T_{0}$,
where $\iota$ is the logarithmic term depending on $T_{0},C_{h},2/\delta$.
\end{lemma}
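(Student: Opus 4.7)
The plan is to establish this excess risk bound via a standard covering-argument analysis for parametric regression with sub-Gaussian noise under realizability. First, I would use the $\epsilon$-optimality of $\hat{w}_{0}$ combined with the fact that $w^{\star}$ is feasible: since the empirical risk minimizer $\hat{w}^{\star}_{0}$ satisfies $\hat{L}_{T_{0}}(\hat{w}^{\star}_{0}) \leq \hat{L}_{T_{0}}(w^{\star})$, equation~(\ref{eq:approximation_error_w0}) gives $\hat{L}_{T_{0}}(\hat{w}_{0}) \leq \hat{L}_{T_{0}}(w^{\star}) + \epsilon$. Expanding the squares and applying realizability ($y_{s} = f_{\bx_{s}}(w^{\star}) + \eta_{s}$) produces the basic inequality
\begin{equation*}
\frac{1}{T_{0}}\sum_{s=1}^{T_{0}} \bigl(f_{\bx_{s}}(\hat{w}_{0}) - f_{\bx_{s}}(w^{\star})\bigr)^{2} \leq \frac{2}{T_{0}}\sum_{s=1}^{T_{0}} \eta_{s}\bigl(f_{\bx_{s}}(\hat{w}_{0}) - f_{\bx_{s}}(w^{\star})\bigr) + \epsilon.
\end{equation*}

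Next I would discretize via an $\ell_{2}$ covering of the parameter space. Since $\cW \subset [0,1]^{d_{w}}$, a $\tau$-net $\cN_{\tau}$ has cardinality $|\cN_{\tau}| \leq (3\sqrt{d_{w}}/\tau)^{d_{w}}$, and by the $C_{g}$-Lipschitzness of $f_{\bx}(w)$ from Assumption~\ref{assump:2} this induces a $(C_{g}\tau)$-cover of $\{f_{\bx}(w): w\in\cW\}$ in sup-norm (the Hessian bound $C_{h}$ enters when refining the cover via a local quadratic Taylor expansion around each anchor, which is what produces the $\log C_{h}$ summand inside $\iota$). On this net I would apply, for each $\tilde{w}\in\cN_{\tau}$: (i) a sub-Gaussian Bernstein bound to control the noise term $\tfrac{1}{T_{0}}\sum_{s}\eta_{s}(f_{\bx_{s}}(\tilde{w})-f_{\bx_{s}}(w^{\star}))$ in terms of the empirical second moment $\tfrac{1}{T_{0}}\sum_{s}(f_{\bx_{s}}(\tilde{w})-f_{\bx_{s}}(w^{\star}))^{2}$, and (ii) a (one-sided) Bernstein inequality to transfer this empirical squared error to its population counterpart $L(\tilde{w})=\E_{\bx\sim\cU}(f_{\bx}(\tilde{w})-f_{\bx}(w^{\star}))^{2}$. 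A union bound over $\cN_{\tau}$ contributes a $\log|\cN_{\tau}|=O(d_{w}\log(T_{0}/\tau))$ factor, and choosing $\tau$ polynomially small in $T_{0}$ (e.g.\ $\tau=1/(C_{g}T_{0})$) makes all discretization terms $O(1/T_{0})$, which is absorbed into $\iota$.

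Finally I would apply these uniform bounds to the nearest cover point of $\hat{w}_{0}$ and substitute back into the basic inequality. Writing $V := L(\hat{w}_{0})$, the noise term is controlled by $O\bigl(F\sqrt{V\cdot d_{w}\iota/T_{0}} + d_{w}F^{2}\iota/T_{0}\bigr)$ and the empirical squared error is at least $\tfrac{1}{2}V - O(d_{w}F^{2}\iota/T_{0})$, producing the quadratic-in-$\sqrt{V}$ inequality $V \leq C_{1}\sqrt{V\cdot d_{w}F^{2}\iota/T_{0}} + C_{2}\,d_{w}F^{2}\iota/T_{0} + 2\epsilon$. Solving yields $V \leq C' d_{w}F^{2}\iota/T_{0}$ once the assumed condition $\epsilon \leq 1/(C'T_{0})$ is used to absorb the approximation error into the statistical rate. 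The main obstacle is precisely this self-referential localization: the noise inner product scales with $\sqrt{V}$ itself, so one cannot simply chain uniform bounds with a pre-specified radius and must instead work with the data-dependent empirical second moment (equivalently, a peeling/localization argument) to get the sharp $1/T_{0}$ rate rather than $1/\sqrt{T_{0}}$; keeping the covering-number bookkeeping clean enough to yield a single logarithmic $\iota=\mathrm{polylog}(T_{0},C_{h},1/\delta)$ factor is where the technical care is concentrated.
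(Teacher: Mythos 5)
Your proposal is correct and follows the same overall strategy as the paper: reduce the continuous parameter class $\cW\subset[0,1]^{d_w}$ to a finite class via an $\varepsilon$-net (contributing the $d_w\log(1/\varepsilon)$ factor), prove an excess-risk bound for an $\epsilon$-approximate empirical risk minimizer over that finite class, use realizability to kill the approximation-error term, and absorb both the discretization error and the oracle's optimization error $\epsilon$ into the $O(d_wF^2\iota/T_0)$ rate by taking $\varepsilon$ polynomially small in $T_0$ and $\epsilon\lesssim 1/T_0$. The one place you diverge is in how the finite-class concentration is obtained: the paper imports it as a packaged oracle inequality (Lemma~\ref{lem:expected_error}, proved via a penalized-ERM argument with the Craig--Bernstein inequality, where the $\epsilon$-suboptimality of $\hat{w}_0$ enters additively through $\hat{R}(\tilde{f}_t)\le\hat{R}(\hat{f}_t)+\epsilon$), whereas you re-derive it by hand from the basic inequality plus Bernstein bounds on the net and a self-bounding quadratic in $\sqrt{V}$. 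The two are mathematically equivalent in substance -- your localization via the data-dependent empirical second moment is exactly what the Craig--Bernstein route delivers with the $\alpha$-weighted excess-risk comparison -- so this is a presentational rather than substantive difference. Two minor bookkeeping remarks: the noise cross term should carry $\sigma$ rather than $F$ (harmless here since the final bound is stated in terms of $F^2$ and the paper likewise folds $\sigma$ into the boundedness constants), and in the paper the $\log C_h$ inside $\iota$ arises simply from the choice $\varepsilon=1/(2\sqrt{T_0C_h^2})$ in the discretization error $2\varepsilon^2C_h^2$ controlled via Assumption~\ref{assump:2}, not from any second-order refinement of the cover.
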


Lemma \ref{lem:expected_error_covering_arguments} is adapted from Lemma 5.1 of \citet{liu2023global} to account for the additional approximation error from the distributed regression oracle.
Specifically, instead of proving the risk bound for the exact minimizer $\hat{w}^{\star}_{0}$, we consider $\hat{w}_{0}$ that satisfies $|\hat{L}_{T_{0}}(\hat{w}_{0})-\hat{L}_{T_{0}}(\hat{w}^{\star}_{0})| \leq \epsilon$ for some constant $\epsilon$. As discussed in Section \ref{subsec:algorithm}, this relaxation is essential for the communication efficiency in federated bandit optimization.
And Lemma \ref{lem:expected_error_covering_arguments} shows that, by ensuring $\epsilon \leq 1/(C^{\prime} T_{0})$, we can obtain the same regression oracle guarantee as in the centralized setting \citep{liu2023global}. As discussed in Remark \ref{rmk:convergence_gld}, this condition can be satisfied with $n=O(\frac{C^{\prime} d_{x}  T_{0}}{\nu}\cdot \log(C^{\prime}T_{0}))$ number of iterations. With Lemma \ref{lem:expected_error_covering_arguments}, we can establish Lemma \ref{lem:reg_oracle} below using the same arguments as \citet{liu2023global}.

\begin{lemma}[Regression oracle guarantee (Theorem 5.2 of \cite{liu2023global})]\label{lem:reg_oracle}
Under Assumption \ref{assump:1}, \ref{assump:2}, and \ref{assump:3}, and by setting $\epsilon \leq 1/(C^{\prime} T_{0})$, there exists an absolute constant $C$ such that after time step $T_{0}$ in Phase I of Algorithm \ref{alg:fed_go_ucb}, where $T_{0}$ satisfies 
$T_{0} \geq C d_{w}F^{2}\iota \cdot \max\bigl\{ \frac{\mu^{\gamma/(2-\gamma)} }{\tau^{2/(2-\gamma)}}, \frac{\zeta}{\mu c^{2}} \bigr\}$,
with probability at least $1-\delta/2$, regression oracle estimated $\hat{w}_{0}$ satisfies $\lVert \hat{w}_{0} -w^{\star} \rVert_{2}^{2} \leq \frac{C d_{w} F^{2} \iota}{\mu T_{0}}$.
\end{lemma}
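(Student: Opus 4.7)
The plan is to convert the in-expectation risk bound delivered by Lemma~\ref{lem:expected_error_covering_arguments} into a parameter-error bound by invoking Assumption~\ref{assump:3}. Since $w^{\star}$ is the global minimizer of $L$ with $L(w^{\star})=0$, Lemma~\ref{lem:expected_error_covering_arguments} (valid once $\epsilon\leq 1/(C^{\prime}T_{0})$) reads $L(\hat{w}_{0})\leq C^{\prime}d_{w}F^{2}\iota/T_{0}$ with probability at least $1-\delta/2$. The target inequality $\lVert\hat{w}_{0}-w^{\star}\rVert_{2}^{2}\leq Cd_{w}F^{2}\iota/(\mu T_{0})$ would follow at once from local $\mu$-strong convexity, so the whole job is to show that, under the assumed lower bound on $T_{0}$, $\hat{w}_{0}$ lies inside the neighborhood where strong convexity is guaranteed by Assumption~\ref{assump:3}.

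First I would exploit the $(\tau,\gamma)$-growth branch to produce a crude $\ell_{2}$ bound that holds regardless of which branch of Assumption~\ref{assump:3} governs $\hat{w}_{0}$. Specifically, if strong convexity does not already apply, then $(\tau/2)\lVert\hat{w}_{0}-w^{\star}\rVert_{2}^{\gamma}\leq L(\hat{w}_{0})\leq C^{\prime}d_{w}F^{2}\iota/T_{0}$, which rearranges to $\lVert\hat{w}_{0}-w^{\star}\rVert_{2}^{2}\leq (2C^{\prime}d_{w}F^{2}\iota/(\tau T_{0}))^{2/\gamma}$. Imposing $T_{0}\gtrsim d_{w}F^{2}\iota\cdot\mu^{\gamma/(2-\gamma)}/\tau^{2/(2-\gamma)}$ forces the right-hand side to be at most $2/\mu$, which in particular certifies that $\hat{w}_{0}$ is near enough $w^{\star}$ for the strong-convexity clause of Assumption~\ref{assump:3} to apply.

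The next step is to verify the self-concordance precondition $\lVert\hat{w}_{0}-w^{\star}\rVert_{\nabla^{2}L(w^{\star})}\leq c$, since only inside that neighborhood does Assumption~\ref{assump:3} transfer local curvature at $w^{\star}$ to nearby $w$. Because Assumption~\ref{assump:2} yields $\lVert\nabla^{2}L(w^{\star})\rVert_{\mathrm{op}}\leq \zeta$ (for $\zeta=2C_{g}^{2}$, as remarked after Assumption~\ref{assump:3}), it suffices to certify $\lVert\hat{w}_{0}-w^{\star}\rVert_{2}^{2}\leq c^{2}/\zeta$. Plugging the crude bound from the previous step shows that this is implied by $T_{0}\gtrsim d_{w}F^{2}\iota\cdot\zeta/(\mu c^{2})$. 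Taking the maximum of the two thresholds delivers exactly the condition on $T_{0}$ stated in the lemma.

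Once both conditions are confirmed, the final step is routine: the local $\mu$-strong-convexity branch of Assumption~\ref{assump:3} gives $(\mu/2)\lVert\hat{w}_{0}-w^{\star}\rVert_{2}^{2}\leq L(\hat{w}_{0})\leq C^{\prime}d_{w}F^{2}\iota/T_{0}$, which rearranges to the claimed bound after absorbing the constant into $C$. The main obstacle is the bootstrapping step: I must use the weaker growth inequality to rule out the possibility that $\hat{w}_{0}$ is far from $w^{\star}$, and simultaneously ensure it enters the self-concordance ball, and the two requirements impose the two terms inside the $\max\{\cdot,\cdot\}$. Everything else follows by the same arithmetic as in Theorem~5.2 of \citet{liu2023global}, the only real difference being that we start from the slightly weakened risk guarantee (allowing the approximation error $\epsilon$) rather than from the exact empirical risk minimizer.
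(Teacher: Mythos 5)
Your proposal is correct and follows essentially the same route the paper intends: it reconstructs the argument of Theorem 5.2 of \citet{liu2023global} starting from the $\epsilon$-relaxed risk bound of Lemma~\ref{lem:expected_error_covering_arguments}, using the growth branch of Assumption~\ref{assump:3} to bootstrap $\hat{w}_{0}$ into the regime where the $\mu$-branch of the min is active, with the two terms in the $\max$ arising exactly as you describe. Two small imprecisions worth fixing: the crossover radius between the two branches is $\lVert w-w^{\star}\rVert_{2}^{2}\leq(\tau/\mu)^{2/(2-\gamma)}$ rather than $2/\mu$, and the self-concordance check $\lVert\hat{w}_{0}-w^{\star}\rVert_{2}^{2}\leq c^{2}/\zeta$ should be verified with the refined bound $Cd_{w}F^{2}\iota/(\mu T_{0})$ (it is needed for the downstream confidence-set lemma, not for the stated conclusion itself).
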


\subsection{Phase II. Confidence Set Construction \& Optimistic Exploration}  \label{subsec:theory_phaseII}

\paragraph{Confidence Set Construction}
In Phase II of Algorithm \ref{alg:fed_go_ucb}, each client $i$ selects its next point to evaluate based on OFU principle, which requires construction of the confidence set in \eqref{eq:ball}. The following lemma specifies the proper choice of $\beta_{t,i_{t}}$, such that $\text{Ball}_{t,i_{t}}$ contains true parameter $w^{\star}$ for all $t \in [T]$ with high probability.

\begin{lemma}\label{lem:feasible_ball}
Under Assumption \ref{assump:1}, \ref{assump:2}, \& \ref{assump:3} and by setting $\beta_{t,i_t} = \tilde{\Theta} \left(d_w \sigma^2 + d_w F^2/\mu + d^3_w F^4/\mu^2\right)$, $T_{0}= \sqrt{NT}$, and $\lambda = C_\lambda \sqrt{NT}$, then $\|\hat{w}_{t,i_t} - w^*\|^2_{\Sigma_{t,i_t}} \leq \beta_{t,i_t}$, with probability at least $1-\delta$, $\forall t \in [NT]$ in Phase II of Algorithm \ref{alg:fed_go_ucb}.
\end{lemma}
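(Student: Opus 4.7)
The plan is to linearize the nonlinear residual by Taylor-expanding $f_{\bx_s}$ around $\hat{w}_0$, after which the estimator in \eqref{eq:w_t}--\eqref{eq:sigma_t} becomes a ridge regression with an extra model-mismatch term, and the classical self-normalized analysis \citep{abbasi2011improved} can be invoked. Writing $g_s := \nabla f_{\bx_s}(\hat{w}_0)$ and using $y_s = f_{\bx_s}(w^*) + \eta_s$ together with the second-order Taylor expansion
\begin{equation*}
f_{\bx_s}(w^*) - f_{\bx_s}(\hat{w}_0) = g_s^\top(w^* - \hat{w}_0) + T_s,\qquad |T_s| \le \tfrac{C_h}{2}\|w^* - \hat{w}_0\|_2^2
\end{equation*}
(the remainder bound follows from Assumption \ref{assump:2}), a short algebraic manipulation based on $\Sigma_{t,i_t} - \lambda \mathbf{I} = \sum_{s \in D_{t,i_t}} g_s g_s^\top$ gives
\begin{equation*}
\hat{w}_{t,i_t} - w^* = \lambda\,\Sigma_{t,i_t}^{-1}(\hat{w}_0 - w^*) + \Sigma_{t,i_t}^{-1}\sum_{s \in D_{t,i_t}} g_s T_s + \Sigma_{t,i_t}^{-1}\sum_{s \in D_{t,i_t}} g_s \eta_s.
\end{equation*}
Taking the $\Sigma_{t,i_t}$-norm and applying the triangle inequality separates $\|\hat{w}_{t,i_t} - w^*\|_{\Sigma_{t,i_t}}$ into three pieces that I bound independently.

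For the regularization bias, I use $\Sigma_{t,i_t}^{-1} \preceq \lambda^{-1}\mathbf{I}$ to obtain $\lambda\|\hat{w}_0 - w^*\|_{\Sigma_{t,i_t}^{-1}} \le \sqrt{\lambda}\,\|\hat{w}_0 - w^*\|_2$; combining Lemma \ref{lem:reg_oracle} (which yields $\|\hat{w}_0 - w^*\|_2^2 = \tilde O(d_w F^2/(\mu T_0))$) with $T_0 = \sqrt{NT}$ and $\lambda = C_\lambda\sqrt{NT}$ produces a squared contribution of order $\tilde O(d_w F^2/\mu)$. For the noise term, the partial sums $\sum_s g_s \eta_s$ form a vector-valued martingale: $\hat{w}_0$ is frozen at the end of Phase I, and each $\bx_s$, and therefore each $g_s$, is predictable with respect to the natural filtration generated by the history of observations and event-triggered synchronizations. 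The standard self-normalized tail inequality of \citet{abbasi2011improved} then yields a squared contribution of order $\tilde O(d_w \sigma^2)$, matching the linear case, with the uniform-in-$t$ guarantee following from the stopping-time version of that inequality.

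The heart of the argument, and the step I expect to be the main obstacle, is the Taylor remainder term $\Sigma_{t,i_t}^{-1}\sum_s g_s T_s$, because $|T_s|$ is a deterministic bias that does not shrink with additional samples. Applying Cauchy--Schwarz twice,
\begin{equation*}
\Bigl\| \sum_{s \in D_{t,i_t}} g_s T_s \Bigr\|_{\Sigma_{t,i_t}^{-1}}^{\,2} \;\le\; \Bigl( \sum_{s} T_s^2\Bigr)\Bigl( \sum_{s} \|g_s\|_{\Sigma_{t,i_t}^{-1}}^{\,2}\Bigr),
\end{equation*}
and using the trace identity
\begin{equation*}
\sum_{s \in D_{t,i_t}} \|g_s\|_{\Sigma_{t,i_t}^{-1}}^{\,2} = \tr\bigl(\Sigma_{t,i_t}^{-1}(\Sigma_{t,i_t} - \lambda \mathbf{I})\bigr) \le d_w,
\end{equation*}
together with the uniform bound $|T_s| \le (C_h/2)\|\hat{w}_0 - w^*\|_2^2 \le \tilde O(d_w F^2/(\mu T_0))$ from Lemma \ref{lem:reg_oracle} and $|D_{t,i_t}| \le NT$, the product becomes $\tilde O\bigl( |D_{t,i_t}|\cdot d_w^3 F^4/(\mu^2 T_0^2)\bigr) = \tilde O(d_w^3 F^4/\mu^2)$, where the crucial cancellation exploits $T_0^2 = NT \ge |D_{t,i_t}|$; the calibrated choice of $T_0$ is exactly what lets the quadratic Taylor bias be absorbed. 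Setting $\beta_{t,i_t}$ to the sum of the three squared bounds (with appropriate polylogarithmic factors) and splitting the failure probability between the self-normalized inequality and Lemma \ref{lem:reg_oracle} gives the stated confidence set with overall probability at least $1-\delta$, uniformly in $t \in [NT]$.
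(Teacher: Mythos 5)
Your proof follows essentially the same route as the paper's: the identical three-term decomposition of $\hat{w}_{t,i_t}-w^*$ into regularization bias, second-order Taylor remainder, and noise, bounded respectively via Lemma \ref{lem:reg_oracle} with the calibrated $\lambda=C_\lambda\sqrt{NT}$ and $T_0=\sqrt{NT}$, the self-normalized martingale inequality, and Cauchy--Schwarz on the remainder sum. The only cosmetic difference is that you control $\sum_{s}\lVert \nabla f_{\bx_s}(\hat{w}_0)\rVert^2_{\Sigma_{t,i_t}^{-1}}$ by the trace identity (giving $\leq d_w$ exactly) where the paper invokes the elliptical-potential bound of Lemma \ref{lem:sum_squared_matrix_weighted_norm}; both yield the same $\tilde{O}(d_w^3F^4/\mu^2)$ contribution, so the argument is sound.
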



\paragraph{Cumulative Regret in Phase II}
Thanks to the confidence sets established in Lemma \ref{lem:feasible_ball}, Algorithm \ref{alg:fed_go_ucb} can utlize a communication protocol similar to the ones designed for federated linear bandits \citep{wang2020distributed,dubey2020differentially} during Phase II, while providing much diverse modeling choices. 
Therefore, our analysis of the communication regret and communication cost incurred in Phase II follows a similar procedure as its linear counterparts.

Denote the total number of global synchronizations (total number of times the event in line 6 of Algorithm \ref{alg:fed_go_ucb} is true) over time horizon $T$ as $P \in [0,N P]$.
Then we use $t_{p}$ for $p \in [P]$ to denote the time step when the $p$-th synchronization happens (define $t_{0}=0$), and refer to the sequence of time steps in-between two consecutive synchronizations as an epoch, i.e., the $p$-th epoch is $[t_{p-1}+1,t_{p}]$.
Similar to \cite{wang2020distributed,dubey2020differentially,li2022glb}, for the cumulative regret analysis in Phase II, we decompose $P$ epochs into good and bad epochs, and then analyze them separately.
Specifically, consider an imaginary centralized agent that has immediate access to each data point in the learning system, and we let this centralized agent executes the same model update rule and arm selection rule as in line 3-5 of Algorithm \ref{alg:fed_go_ucb}.
Then the covariance matrix maintained by this agent can be defined as $\Sigma_{t}=\sum_{s=1}^{t} \nabla f_{x_s}(\hat{w}_{0}) \nabla f_{x_s}(\hat{w}_{0})^\top$ for $t \in [NT]$.
Then the $p$-th epoch is called a good epoch if $\ln\left(\frac{\det(\Sigma_{t_{p}})}{\det(\Sigma_{t_{p-1}})}\right) \leq 1$,
otherwise it is a bad epoch. 
Note that based on Lemma \ref{lem:log_det}, we have $    \ln(\det(\Sigma_{NT})/\det(\lambda I)) \leq d_{w} \log{(1+\frac{NT C_{g}^{2}}{d_{w}\lambda})}:=R$.
Since $\ln(\frac{\det(\Sigma_{t_{1}})}{\det(\lambda I)})+\ln(\frac{\det(\Sigma_{t_{2}})}{\det(\Sigma_{t_{1}})})+\dots+\ln(\frac{\det(\Sigma_{NT})}{\det(\Sigma_{t_{B}})}) = \ln(\frac{\det(\Sigma_{NT})}{\det(\lambda I)}) \leq R$, and due to the pigeonhole principle, there can be at most $R$ bad epochs.
Then with standard optimistic argument \citep{abbasi2011improved}, we can show that the cumulative regret incurred in good epochs $R_{\text{good}}=\tilde{O}(\frac{d_{w}^{3}F^{4}\sqrt{NT}}{\mu^{2}})$, which matches the regret of centralized algorithm by \citep{liu2023global}.
Moreover, by design of the event-triggered communication in line 6 of Algorithm \ref{alg:fed_go_ucb}, we can show that the cumulative regret incurred in any bad epoch $p$ can be bounded by $\sum_{t=t_{p-1}+1}^{t_{p}} r_{t} \leq N\sqrt{16 \beta_{NT} \gamma + 8 \beta_{NT}^{2}C_{h}^{2}/C_{\lambda}^{2}}$, where $\beta_{NT}=O(\frac{d_{w}^{3}F^{4}}{\mu^{2}})$ according to Lemma \ref{lem:feasible_ball}.
Since there can be at most $R$ bad epochs, the cumulative regret incurred in bad epochs $R_{\text{bad}}=\tilde{O}(N d_{w}\sqrt{\frac{d_{w}^{3}F^{4}}{\mu^{2}}}\sqrt{\gamma}  + Nd_{w} \frac{d_{w}^{3}F^{4}}{\mu^{2}})$. By setting communication threshold $\gamma=\frac{d_{w} F^{4} T}{\mu^{2}N}$, we have $R_{\text{bad}}=\tilde{O}(\frac{d_{w}^{3}F^{4}\sqrt{NT}}{\mu^{2}} + \frac{d_{w}^{4}F^{4}N}{\mu^{2}})$.
Combining cumulative regret incurred during both good and bad epochs, we have \begin{align*}
    R_{\text{phase II}} = R_{\text{good}} + R_{\text{bad}} = \tilde{O}\left(\sqrt{NT} d_{w}^{3}F^{4}/\mu^{2}+ N d_{w}^{4}F^{4}/\mu^{2} \right).
\end{align*}

\paragraph{Communication Cost in Phase II}
Consider some $\alpha > 0$. By pigeon-hole principle, there can be at most $\lceil \frac{NT}{\alpha}\rceil$ epochs with length (total number of time steps) longer than $\alpha$.
Then we consider some epoch with less than $\alpha$ time steps, similarly, we denote the first time step of this epoch as $t_{s}$ and the last as $t_{e}$, i.e., $t_{e}-t_{s} < \alpha$. 
Since the users appear in a round-robin manner, the number of interactions for any user $i\in[N]$ satisfies $|D_{t_{e},i}| - |D_{t_{s},i}| < \frac{\alpha}{N}$. Due to the event-triggered in line 6 of Algorithm \ref{alg:fed_go_ucb}, we have $\log{\frac{\det(\Sigma_{t_{e}})}{\det(\Sigma_{t_{s}})}} > \frac{\gamma N}{\alpha}$. 
Using the pigeonhole principle again, we know that the number of epochs with less than $\alpha$ time steps is at most $\lceil \frac{R \alpha}{\gamma N}\rceil$. 
Therefore, the total number of synchronizations $P \leq \lceil \frac{NT}{\alpha}\rceil+\lceil \frac{R \alpha}{\gamma N}\rceil$, and the RHS can be minimized by choosing $\alpha=N\sqrt{\gamma T/R}$, so that $P \leq 2 \sqrt{T R/\gamma}$. With $\gamma=\frac{d_{w} F^{4} T}{\mu^{2}N}$, $P\leq 2\sqrt{\frac{N\log(1+NT C_{g}^{2}/(d_{w}\lambda)) \mu^{2}}{F^{4}}}=\tilde{O}(\sqrt{N\mu^{2}/F^{4}})$.
At each global synchronization, Algorithm \ref{alg:fed_go_ucb} incurs $2 N (d_{w}^{2}+d_{w})$ communication cost to update the statistics.
Therefore, $C_{\text{phase II}} = P \cdot 2 N (d_{w}^{2}+d_{w}) = \title{O}(N^{1.5} d_{w}^{2}\mu/F^{2})$.



\begin{figure}[t]
\centering 
\includegraphics[width=0.99\textwidth]{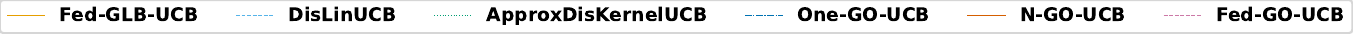}
Hartmann function $f_1$ ($d_x=6, d_w=201, T=100, N=20$)
{\label{fig:a}\includegraphics[width=0.99\textwidth]{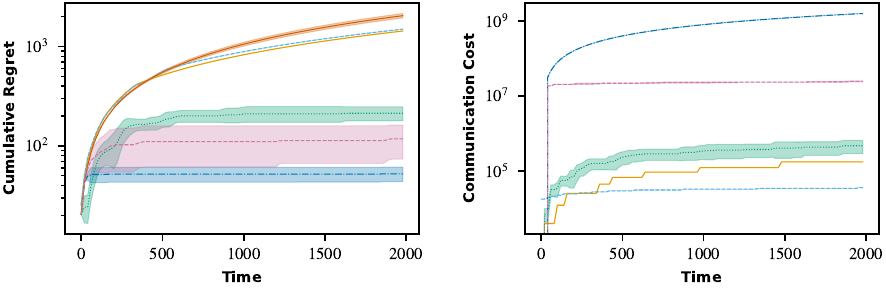}}
Cosine8 function $f_2$ ($d_x=8, d_w=301, T=100, N=20$)
{\label{fig:b}\includegraphics[width=0.99\textwidth]{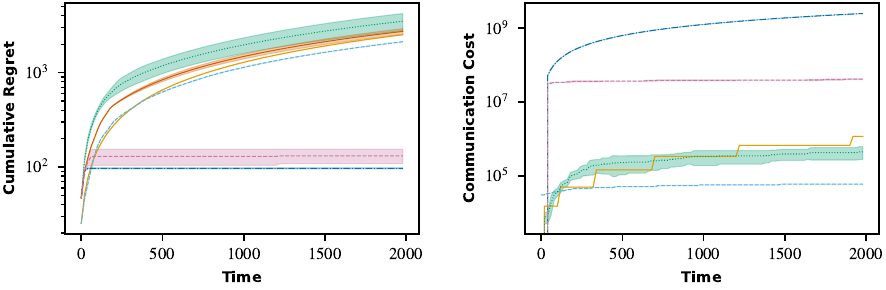}}
\caption{Comparison of cumulative regret and communication cost on synthetic test functions.}
\label{fig:exp_results}
\vspace{-2mm}
\end{figure}

\section{Experiments}\label{sec:experiment}
In order to evaluate Fed-GO-UCB's empirical performance and validate our theoretical results in Theorem \ref{thm:regret_communication}, we conducted experiments on both synthetic and real-world datasets.

\paragraph{Synthetic dataset experiment setup \& results}
For synthetic dataset, we consider two test functions, $f_{1}(\bx) = - \sum_{i=1}^4 \bar{\alpha}_{i} \exp( - \sum_{j=1}^{6} \bar{A}_{ij} (\bx_{j} - \bar{P}_{ij})^{2} )$ (see values of $\bar{\alpha},\bar{A},\bar{P}$ in Appendix \ref{sec:experiment_app}) and $f_{2}(\bx) = 0.1 \sum_{i=1}^{8} \cos(5 \pi \bx_{i}) - \sum_{i=1}^{8} \bx_{i}^{2}$. The decision set $\cX$ is finite (with $|\cX|=50$), and is generated by uniformly sampling from $[0, 1]^6$ and $[-1, 1]^8$, respectively.
We choose $\cF$ to be a neural network with two linear layers, i.e., the model $\hat{f}(\bx)= W_{2} \cdot \sigma(W_{1} \bx+c_{1}) + c_{2}$, where the parameters $W_{1} \in \R^{25,d_{x}},c_{1}\in \R^{25}, W_{2} \in \R^{25}, c_{2}\in \R$, and $\sigma(z)=1/(1+\exp(-z))$.
Results (averaged over 10 runs) are reported in Figure \ref{fig:exp_results}.
We included DisLinUCB \citep{wang2020distributed}, Fed-GLB-UCB \citep{li2022glb}, ApproxDisKernelUCB \citep{li2022kernel}, One-GO-UCB, and N-GO-UCB \citep{liu2023global} as baselines. One-GO-UCB learns one model for all clients by immediately synchronizing every data point, and N-GO-UCB learns one model for each client with no communication. 
From Figure, \ref{fig:exp_results}, we can see that Fed-GO-UCB and One-GO-UCB have much smaller regret than other baseline algorithms, demonstrating the superiority of neural network approximation for the global non-linear optimization, though inevitably this comes at the expense of higher communication cost due to transferring statistics of size $d_{w}^{2}\approx 10^{4}$ (compared with $d_{x}^{2}\approx 10^2$ for the baselines). Nevertheless, we can see that the communication cost of Fed-GO-UCB is significantly lower than One-GO-UCB and it grows at a sub-linear rate over time, which conforms with our theoretical results in Theorem \ref{thm:regret_communication}. 



\paragraph{Real-world dataset experiment setup \& results}
To further evaluate Fed-GO-UCB's performance in a more challenging and practical scenario, we performed experiments using real-world datasets: MagicTelescope and Shuttle from the UCI Machine Learning Repository \citep{Dua:2019}. We pre-processed these two datasets following the steps in prior works \citep{filippi2010parametric}, by partitioning the dataset in to 20 clusters, and using the centroid of each cluster as feature vector for the arm and its averaged response as mean reward. Then we simulated the federated bandit learning problem introduced in Section \ref{sec:pre} with $T = 100$ and $N = 100$. From Figure \ref{fig:exp_results_realworld}, we can see that Fed-GO-UCB outperforms the baselines, with relatively low communication cost.

\begin{figure}[t]
\centering 
\includegraphics[width=0.99\textwidth]{imgs/legend.pdf}
Magic Telescope ($d_x=10, d_w=201, T=100, N=100$)
{\label{fig:c}\includegraphics[width=0.99\textwidth]{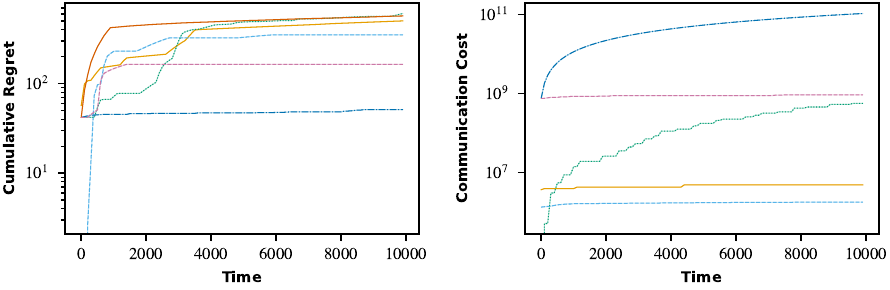}}\\
\quad \newline
Shuttle ($d_x=10, d_w=201, T=100, N=100$)
{\label{fig:d}\includegraphics[width=0.99\textwidth]{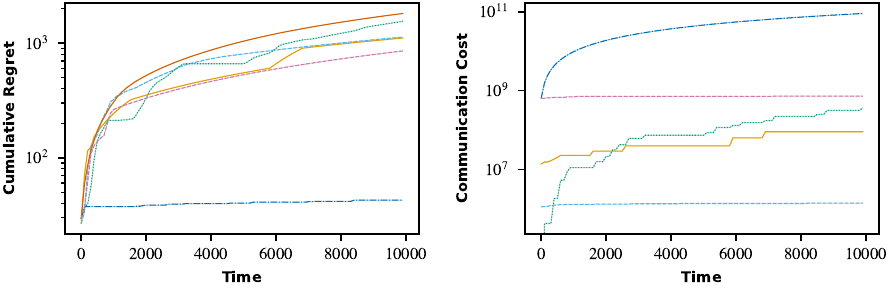}}
\caption{Comparison of cumulative regret and communication cost on real-world datasets.}
\label{fig:exp_results_realworld}
\end{figure}

\section{Conclusions and Future Work}\label{sec:conclusion}

Despite the potential of federated optimization in high-impact applications, such as, clinical trial optimization, hyperparameter tuning, and drug discovery, there is a gap between current theoretical studies and practical usages, i.e.,, federated optimization is often needed in online tasks, like next-word prediction on keyboard apps, but most existing works formulate it as an offline problem. 
To bridge this gap, some recent works propose to study federated bandit optimization problem, but their objective functions are limited to simplistic classes, e.g., linear, generalized linear, or non-parametric function class with bounded RKHS norm, which limits their potential in real-world applications.

In this paper, we propose the first federated bandit optimization method, named Fed-GO-UCB, that works with generic non-linear objective functions. Under some mild conditions, we rigorously prove that Fed-GO-UCB is able to achieve $\tilde{O}(\sqrt{NT})$ cumulative regret and $\tilde{O}(N^{1.5}\sqrt{T}+N^{1.5})$ communication cost where $T$ is time horizon and $N$ is number of clients. Our technical analysis builds upon \citet{xu2018global,liu2023global} and the main novelties lie in the distributed regression oracle and individual confidence set construction, which makes collaborative exploration under federated setting possible. 
In addition, extensive empirical evaluations are performed to validate the effectiveness of our algorithm, especially its ability in approximating nonlinear functions.

One interesting future direction is to generalize our work to heterogeneous clients, i.e., each client $i \in [N]$ has a different reward function $f_{i}$, that can be assumed to be close to each other as in \cite{dubey2021provably}, or have shared components as in \cite{li2022asynchronous}. 
This allows us to better model the complex situations in reality, especially when personalized decisions are preferred.

\section*{Acknowledgement}
This work is supported by NSF Awards \#1934641 and \#2134214.

\bibliography{bibfile}

\clearpage
\appendix
\section{Notation Table}\label{sec:table}
\begin{table}[!htbp]
\centering
\caption{Symbols and notations.}\label{tab:notations}
\begin{tabular}{ccl}
\noalign{\smallskip} \hline
\textbf{Symbol}  & \textbf{Definition} & \textbf{Description} \\ \hline
$\|A\|_\mathrm{op}$ & & operator norm of matrix $A$\\ \hline
$\mathrm{Ball}_t$ &  \eqref{eq:ball} & parameter uncertainty region at round $t$\\ \hline
$\beta_t$ & given in Lemma \ref{lem:feasible_ball}  & parameter uncertainty region radius at round $t$\\ \hline
$C, \zeta$ & & constants\\ \hline
$d_x$       &      &  domain dimension     \\ \hline
$d_w$       &      &  parameter dimension     \\ \hline
$\delta$       &      &  failure probability \\ \hline
$\varepsilon$       &      &  covering number discretization distance    \\ \hline
$\eta$       & $\sigma$-sub-Gaussian     &  observation noise     \\ \hline
$f_w(\bx)$       &      &  objective function at $\bx$ parameterized by $w$    \\ \hline
$f_\bx(w)$       &      &  objective function at $w$ parameterized by $\bx$    \\ \hline
$\nabla f_\bx(w)$       &      &  1st order derivative w.r.t. $w$ parameterized by $\bx$    \\ \hline
$\nabla^2 f_\bx(w)$       &      &  2nd order derivative w.r.t. $w$ parameterized by $\bx$    \\ \hline
$F$ &           &  function range constant bound   \\ \hline 
$\iota, \iota', \iota{''}$ &           &  logarithmic terms     \\ \hline
$L(w)$ & $\E[(f_x(w) - f_x(w^*))^2]$          &  expected loss function     \\ \hline 
$\lambda$ & & regularization parameter \\ \hline
$\mu$ & & strong convexity parameter \\ \hline
$[T]$             &  $\{1,2,...,T\}$    & integer set of size $T$\\ \hline
$N$             &      & number of agents\\ \hline
$\mathrm{Oracle}$             &      & regression oracle \\ \hline
$n$             &      & number of iterations to execute $\mathrm{Oracle}$\\ \hline
$\epsilon$             &      & approximation error guaranteed of $\mathrm{Oracle}$\\ \hline
$\gamma$             &      & communication threshold \\ \hline
$r_t$ & $f_{w^*}(x^*) - f_{w^*}(x_t)$ & instantaneous regret at round $t$ \\ \hline
$R_T$ & $\sum_{t=1}^T r_t$ & cumulative regret after round $T$\\ \hline
$\Sigma_t$ & \eqref{eq:sigma_t}           & covariance matrix at round $t$ \\ \hline
$T_0$             &      & time horizon in Phase I \\ \hline
$T$ &           & time horizon in Phase II    \\ \hline
$\cU$ & & uniform distribution \\ \hline
$w$ &  $w \in \cW$         & function parameter    \\ \hline 
$w^*$ &  $w^* \in \cW$         & true parameter    \\ \hline 
$\hat{w}_0$ &           & oracle-estimated parameter after Phase I    \\ \hline 
$\hat{w}_{t,i}$ & \eqref{eq:w_t}          & updated parameter of client $i$ at round $t$\\ \hline 
$\cW$ & $\cW \subseteq [0,1]^{d_w}$ & parameter space \\ \hline
$\bx$ & $\bx \in \cX$ & data point \\\hline
$\bx^*$ & & optimal data point \\\hline
$\|\bx\|_p$ &$(\sum_{i=1}^d |\bx_i|^p)^{1/p}$ & $\ell_p$ norm \\\hline
$\|\bx\|_A$ &$\sqrt{\bx^\top A \bx}$ & distance defined by square matrix $A$ \\\hline
$\cX$   & $\cX \subseteq \mathbb{R}^{d_x}$        &  function domain     \\  \hline
$\cY$   & $\cY = [-F, F]$        &  function range     \\  \hline
\end{tabular}
\end{table}

\newpage
\section{Auxiliary Lemmas}\label{sec:auxiliary}
\begin{lemma}[Lemma C.5 of \citet{liu2023global}]\label{lem:log_det}
Set $\Sigma_{t,i_t}$ as in \eqref{eq:sigma_t} and suppose Assumptions \ref{assump:1}, \ref{assump:2}, \& \ref{assump:3} hold. Then
\begin{align*}
\ln \left( \frac{\det \Sigma_{t-1,i_{t}}}{\det \Sigma_0} \right) \leq d_w \ln \left( 1+ \frac{ |D_{t,i_{t}}| C^2_g}{d_w \lambda}\right).
\end{align*}
\end{lemma}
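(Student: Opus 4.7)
The plan is to use the standard \emph{AM-GM + trace bound} argument that underlies the classical elliptical potential / log-determinant inequality in linear bandit analysis (cf.\ Abbasi-Yadkori et al., 2011). The key observation is that $\Sigma_{t-1,i_t}$ is a symmetric positive-definite matrix in $\R^{d_w \times d_w}$ of the form $\lambda I + \sum_{s} v_s v_s^\top$ with $v_s = \nabla f_{\bx_s}(\hat{w}_0)$, so bounding $\det \Sigma_{t-1,i_t}$ reduces to controlling its trace via Assumption \ref{assump:2}.

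Concretely, I would proceed in three short steps. First, by the AM-GM inequality applied to the eigenvalues of the positive-definite matrix $\Sigma_{t-1,i_t}$, one has
\begin{equation*}
\det \Sigma_{t-1,i_t} \;\leq\; \left(\frac{\tr \Sigma_{t-1,i_t}}{d_w}\right)^{d_w}.
\end{equation*}
Second, using linearity of the trace and the gradient-norm bound $\lVert \nabla f_{\bx_s}(\hat{w}_0)\rVert_2 \leq C_g$ from Assumption \ref{assump:2},
\begin{equation*}
\tr \Sigma_{t-1,i_t} \;=\; d_w \lambda + \sum_{s \in D_{t-1,i_t}} \lVert \nabla f_{\bx_s}(\hat{w}_0)\rVert_2^2 \;\leq\; d_w \lambda + |D_{t,i_t}|\, C_g^2,
\end{equation*}
where I use $|D_{t-1,i_t}| \leq |D_{t,i_t}|$. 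Third, combining these with $\det \Sigma_0 = \det(\lambda I) = \lambda^{d_w}$ and taking logarithms gives
\begin{equation*}
\ln\!\left(\frac{\det \Sigma_{t-1,i_t}}{\det \Sigma_0}\right) \;\leq\; d_w \ln\!\left(1 + \frac{|D_{t,i_t}|\, C_g^2}{d_w \lambda}\right),
\end{equation*}
which is exactly the claimed bound.

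I do not expect any genuine obstacle here: this is a routine computation whose only ingredients are the spectral/AM-GM inequality and the uniform bound on $\lVert \nabla f_{\bx}(w)\rVert_2$ supplied by Assumption \ref{assump:2}. Assumptions \ref{assump:1} and \ref{assump:3} are not actually used in the proof and are included in the statement only because the lemma is invoked in a context where they already hold. The only mild care needed is the bookkeeping between $|D_{t-1,i_t}|$ (the true index set in $\Sigma_{t-1,i_t}$) and $|D_{t,i_t}|$ (which appears on the right-hand side); this is handled by the trivial monotonicity $|D_{t-1,i_t}| \leq |D_{t,i_t}|$.
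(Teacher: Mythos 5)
Your proof is correct and is exactly the standard determinant--trace (AM-GM on eigenvalues) argument used to establish this bound; the paper itself imports the lemma from \citet{liu2023global} without reproving it, and that source's proof follows the same route, as does the classical elliptical-potential argument of \citet{abbasi2011improved}. Your handling of the index bookkeeping via $|D_{t-1,i_t}| \leq |D_{t,i_t}|$ and the observation that Assumptions \ref{assump:1} and \ref{assump:3} are not actually needed are both accurate.
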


\begin{lemma}[Lemma C.6 of \citet{liu2023global}]\label{lem:sum_squared_matrix_weighted_norm}
Set $\Sigma_{t,i_t}$ as in \eqref{eq:sigma_t} and suppose Assumptions \ref{assump:1}, \ref{assump:2}, \& \ref{assump:3} hold. Then
\begin{align*}
    \sum_{s \in D_{t,i_{t}}} \nabla f_{\bx_{s}}(\hat{w}_{0})^{\top} \Sigma_{s-1,i_{t}}^{-1} \nabla f_{\bx_{s}}(\hat{w}_{0}) \leq 2 \ln (\frac{\Sigma_{t-1,i_{t}}}{\Sigma_{0}})
\end{align*}
\end{lemma}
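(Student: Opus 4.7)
The plan is to prove this via the classical elliptical potential (a.k.a.\ log-determinant telescoping) argument. Abbreviate $g_s := \nabla f_{\bx_s}(\hat{w}_0)$, so the update in $\Sigma$ along indices $s \in D_{t,i_t}$ is the rank-one addition $\Sigma_{s,i_t} = \Sigma_{s-1,i_t} + g_s g_s^\top$. The starting point is the matrix determinant lemma, which gives
\begin{equation*}
\det(\Sigma_{s,i_t}) \;=\; \det(\Sigma_{s-1,i_t})\bigl(1 + g_s^\top \Sigma_{s-1,i_t}^{-1} g_s\bigr),
\end{equation*}
so that the quadratic form of interest can be rewritten as $g_s^\top \Sigma_{s-1,i_t}^{-1} g_s = \det(\Sigma_{s,i_t})/\det(\Sigma_{s-1,i_t}) - 1$.

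Next I would use the scalar inequality $x \leq 2\ln(1+x)$, valid for $x \in [0,1]$, to bound each summand by $2\ln\!\bigl(\det(\Sigma_{s,i_t})/\det(\Sigma_{s-1,i_t})\bigr)$. Summing over $s \in D_{t,i_t}$ then telescopes: the intermediate determinants cancel in pairs, leaving only the final determinant (which corresponds to $\Sigma_{t-1,i_t}$ under the indexing of the lemma) in the numerator and the initial regularizer $\Sigma_0 = \lambda I$ in the denominator, yielding the stated bound $2\ln(\det(\Sigma_{t-1,i_t})/\det(\Sigma_0))$.

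The main obstacle is verifying that the scalar inequality actually applies termwise, i.e.\ that $g_s^\top \Sigma_{s-1,i_t}^{-1} g_s \leq 1$ for every $s$. This step uses two inputs available to us: (i) Assumption~\ref{assump:2}, which gives $\lVert g_s \rVert_2 \leq C_g$, and (ii) the choice of regularizer $\Sigma_0 = \lambda I$ with $\lambda$ large enough that $\lambda \geq C_g^2$ (this is exactly the regime in which the algorithm operates, since Lemma~\ref{lem:feasible_ball} fixes $\lambda = C_\lambda \sqrt{NT}$, eventually dominating $C_g^2$). Since $\Sigma_{s-1,i_t} \succeq \lambda I$ by positive-semidefiniteness of the cumulative rank-one terms, the operator norm bound $\Sigma_{s-1,i_t}^{-1} \preceq \lambda^{-1} I$ gives $g_s^\top \Sigma_{s-1,i_t}^{-1} g_s \leq \lVert g_s \rVert_2^2/\lambda \leq C_g^2/\lambda \leq 1$, as required.

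With that in hand the rest is bookkeeping: one writes out the telescoping sum carefully with the precise index conventions of $D_{t,i_t}$ (noting that the clients only accumulate data from their own local observations plus synchronization downloads, neither of which affects the telescoping since both merely reorder rank-one updates into $\Sigma$), and reads off the advertised bound. No probabilistic or concentration tools are needed here, which is why the statement should go through under the deterministic boundedness and regularization provided by Assumption~\ref{assump:2} and the choice of $\lambda$.
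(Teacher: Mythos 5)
The paper does not actually prove this lemma; it imports it verbatim as Lemma C.6 of \citet{liu2023global}, where it is a \emph{centralized} statement (effectively $D_{t,i}=[t]$). Your elliptical-potential argument --- matrix determinant lemma, the scalar bound $x\le 2\ln(1+x)$ on $[0,1]$ secured by $\lambda\ge C_g^2$, then telescoping --- is the standard proof of that centralized result, and your observation that the termwise step needs $g_s^\top\Sigma_{s-1,i_t}^{-1}g_s\le 1$ (a hypothesis the restated lemma omits but which the algorithm's choice of $\lambda$ supplies) is correct and necessary.

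The genuine gap is the parenthetical in which you assert that synchronization downloads ``merely reorder rank-one updates into $\Sigma$'' and therefore do not affect the telescoping. That is precisely where the argument breaks in the federated setting. The identity $g_s^\top\Sigma_{s-1,i_t}^{-1}g_s=\det(\Sigma_{s,i_t})/\det(\Sigma_{s-1,i_t})-1$ requires $\Sigma_{s,i_t}=\Sigma_{s-1,i_t}+g_sg_s^\top$, which fails at a synchronization step; worse, for $s\in D_{t,i_t}$ the matrix $\Sigma_{s-1,i_t}$ need not contain the gradients $g_{s'}$ of the \emph{earlier summands} $s'\in D_{t,i_t}$ with $s'<s$, since a point collected by another client before time $s$ may only reach client $i_t$ at a later synchronization. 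Those missing rank-one terms make $\Sigma_{s-1,i_t}$ smaller in the Loewner order than what the telescoping needs, so each summand can exceed its telescoping increment, and the stated bound can actually fail: take $d_w=1$, $\lambda=1$, $C_g=1$, let client $2$ collect $g_1=g_2=1$ at times $1,2$ and client $1$ collect $g_3=1$ at time $3$ with a synchronization triggered at time $3$. Then $D_{3,1}=\{1,2,3\}$, each denominator is $\Sigma_{s-1,1}=1$, so the left-hand side equals $3$, while $2\ln\bigl(\det\Sigma_{2,1}/\det\Sigma_0\bigr)=0$ and even $2\ln\bigl(\det\Sigma_{3,1}/\det\Sigma_0\bigr)=2\ln 4\approx 2.77$. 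The inequality is only valid along index sets for which client $i_t$'s matrix really is built by successive rank-one additions of exactly the prior summands --- the imaginary centralized sequence $\Sigma_t$ and the purely local within-epoch increments $D_{t_p,i}\setminus D_{t_{p-1},i}$, which are in fact the only two ways the paper applies the lemma. Your proof needs to either add that structural hypothesis explicitly or restrict the index set accordingly; as written, the telescoping step is unjustified and the claim it targets is false in full generality.
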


\begin{lemma}[Self-normalized bound for vector-valued martingales \citep{abbasi2011improved,agarwal2021rl}]\label{lem:self_norm}
Let $\{\eta_i\}_{i=1}^\infty$ be a real-valued stochastic process with corresponding ﬁltration $\{\cF_i\}_{i=1}^\infty$ such that $\eta_i$ is $\cF_i$ measurable, $\E[\eta_i | \cF_{i-1} ] = 0$, and $\eta_i$ is conditionally $\sigma$-sub-Gaussian with $\sigma \in \mathbb{R}^+$. Let $\{X_i\}_{i=1}^\infty$ be a stochastic process with $X_i \in \cH$ (some Hilbert space) and $X_i$ being $F_t$ measurable. Assume that a linear operator $\Sigma:\cH \rightarrow \cH$ is positive deﬁnite, i.e., $x^\top \Sigma x > 0$ for any $x \in \cH$. For any $t$, deﬁne the linear operator $\Sigma_t = \Sigma_0 + \sum_{i=1}^t X_i X_i^\top$ (here $xx^\top$ denotes outer-product in $\cH$). With probability at least $1-\delta$, we have for all $t\geq 1$:
\begin{align}
\left\|\sum_{i=1}^t X_i \eta_i \right\|^2_{\Sigma_t^{-1}} \leq \sigma^2 \ln \left(\frac{\det(\Sigma_t) \det(\Sigma_0)^{-1}}{\delta^2} \right).
\end{align}
\end{lemma}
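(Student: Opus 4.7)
The plan is to prove this classical self-normalized concentration inequality via the \emph{method of mixtures} (pseudo-maximization) of Pe\~na--Lai--Shao, as adapted to the sequential decision setting by \citet{abbasi2011improved}. The idea is to replace the supremum-over-$t$ of a quadratic functional by the supremum of a single non-negative super-martingale, and then apply Ville's maximal inequality. This reduction avoids any naive union bound over $t$ and is what gives the ``anytime'' guarantee stated in the lemma.

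First I would build a scalar exponential super-martingale. Fix any deterministic $\lambda\in\cH$ and write $S_t=\sum_{i=1}^t X_i\eta_i$, $V_t=\sum_{i=1}^t X_iX_i^\top$, and
$$M_t^\lambda := \exp\!\Big(\langle \lambda, S_t\rangle - \tfrac{\sigma^2}{2}\,\lambda^\top V_t\lambda\Big).$$
Because $X_i$ is $\cF_{i-1}$-measurable and $\eta_i$ is conditionally $\sigma$-sub-Gaussian, the one-step conditional MGF bound gives $\E[M_t^\lambda / M_{t-1}^\lambda \mid \cF_{t-1}]\leq 1$, so $\{M_t^\lambda\}$ is a non-negative super-martingale with $\E M_0^\lambda = 1$.

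Second, I would apply the mixture trick: draw $h\sim \cN(0,\sigma^{-2}\Sigma_0^{-1})$ independently of everything else and set $M_t:=\int M_t^h\,dp(h)$. By Fubini, $\{M_t\}$ is still a non-negative super-martingale with $\E M_0 \leq 1$. Completing the square in $h$ and performing the Gaussian integral yields the closed form
$$M_t = \sqrt{\tfrac{\det(\Sigma_0)}{\det(\Sigma_t)}}\,\exp\!\Big(\tfrac{1}{2\sigma^2}\,\lVert S_t\rVert^2_{\Sigma_t^{-1}}\Big),$$
where $\Sigma_t=\Sigma_0+V_t$. Ville's maximal inequality then gives $\P(\exists t\geq 1: M_t\geq 1/\delta)\leq \delta$; taking logarithms on the complement and rearranging produces exactly the announced bound, uniformly in $t$.

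The main obstacle is a clean treatment of the Hilbert-space case: if $\cH$ is infinite-dimensional, then $\cN(0,\sigma^{-2}\Sigma_0^{-1})$ is not a genuine Gaussian vector on $\cH$ and $\det(\Sigma_t)$ must be interpreted with care. The standard workaround, which I would adopt, exploits the observation that $S_t\in\spa\{X_1,\dots,X_t\}$ and that $\Sigma_t-\Sigma_0$ has finite-dimensional range almost surely; hence one may perform the mixture calculation on the (random, finite-dimensional) subspace containing the sample path up to any stopping time and then take limits, or equivalently interpret the determinants as Fredholm determinants relative to $\Sigma_0$. Once this functional-analytic bookkeeping is dispensed with, everything else is the classical textbook computation above.
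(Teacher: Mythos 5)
Your proof is correct: it is the standard method-of-mixtures argument (exponential supermartingale, Gaussian mixture with covariance $\sigma^{-2}\Sigma_0^{-1}$, completion of the square, Ville's inequality), and the algebra you sketch does yield exactly the stated bound. The paper does not prove this lemma at all — it imports it verbatim from \citet{abbasi2011improved,agarwal2021rl} — so your argument simply reconstructs the proof from the cited source. One small point: you correctly use that $X_i$ is $\cF_{i-1}$-measurable (predictability) to get the one-step supermartingale property, which is the condition actually needed even though the lemma as stated in the paper sloppily writes ``$X_i$ being $F_t$ measurable''; your handling of the infinite-dimensional case via restriction to the finite-dimensional span of the observed $X_i$'s is also the standard and adequate fix.
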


\begin{lemma}[Risk Bounds for $\epsilon$-approximation of ERM Estimator] \label{lem:expected_error}
Given a dataset $\{x_{s},y_{s}\}_{s=1}^{t}$ where $y_{s}$ is generated from \eqref{eq:reward_function}, and $f^{\star}$ is the underlying true function. Let $\hat{f}_{t}$ be an ERM estimator taking values in $\cF$ where $\cF$ is a finite set and $\cF \subset \{f:[0,1]^{d} \rightarrow [-F,F]\}$ for some $F \geq 1$. $\tilde{f}_{t} \in \cF$ denotes its $\epsilon$-approximation.
Then with probability at least $1-\delta$, $\tilde{f}$ satisfies that
\begin{align*}
    \E\bigl[ (\tilde{f}_{t}-f_{0})^{2} \bigr] \leq \frac{1+\alpha}{1-\alpha} \bigl( \inf_{f\in \cF}\E[(f-f_{0})^{2}] + \frac{F^{2}\ln(|\cF|)\ln(2)}{t \alpha} \bigr) + \frac{2\ln(2/\delta)}{t \alpha} + \frac{\epsilon}{1-\alpha}
\end{align*}
for all $\alpha \in (0,1]$ and $\epsilon \geq 0$.
\end{lemma}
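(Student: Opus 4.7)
The plan is to adapt the classical two-sided ERM risk bound for finite function classes (in the spirit of Zhang's $\rho$-norm arguments) to accommodate the $\epsilon$-suboptimality of $\tilde f_t$. For each $f \in \cF$, set $Z_s(f) := (f(x_s)-y_s)^2 - (f_0(x_s)-y_s)^2$, $\hat U_t(f) := \frac{1}{t}\sum_{s=1}^{t} Z_s(f)$, and $U(f) := \E[Z_1(f)]$. Using $y_s = f_0(x_s) + \eta_s$ with $\E[\eta_s \mid x_s]=0$ gives the clean identity $U(f) = \E[(f(x)-f_0(x))^2]$. Because $f, f_0 \in [-F,F]$, one has $|Z_s(f)| = O(F^2)$ and, crucially, the variance-to-mean bound $\Var(Z_s(f)) = O(F^2) \cdot U(f)$; this proportionality is precisely what enables the fast $1/t$ rate.

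First I would apply a one-sided Bernstein-style MGF inequality pointwise at each $f \in \cF$ and take a union bound over the finite class. This produces, with probability at least $1-\delta/2$, the statement that simultaneously for every $f \in \cF$,
\[
(1-\alpha)\, U(f) \;\le\; \hat U_t(f) + \frac{F^2 \ln(|\cF|)\ln 2}{t\alpha}.
\]
Instantiating this at $f = \tilde f_t$ and invoking the $\epsilon$-approximation property $\hat U_t(\tilde f_t) \le \hat U_t(\hat f_t) + \epsilon \le \hat U_t(f^{\dagger}) + \epsilon$ (valid for any $f^{\dagger} \in \cF$; I pick $f^{\dagger} := \argmin_{f\in\cF} U(f)$) gives
\[
(1-\alpha)\, U(\tilde f_t) \;\le\; \hat U_t(f^{\dagger}) + \epsilon + \frac{F^2 \ln(|\cF|)\ln 2}{t\alpha}.
\]

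Second I would apply a \emph{single-function} upper-tail Bernstein bound (no union bound) to the deterministic $f^{\dagger}$: with probability at least $1-\delta/2$,
\[
\hat U_t(f^{\dagger}) \;\le\; (1+\alpha)\, U(f^{\dagger}) + \frac{2\ln(2/\delta)}{t\alpha}.
\]
Taking a union over the two events (total failure probability $\delta$), substituting into the previous display, dividing by $1-\alpha$, and noting $U(f^{\dagger}) = \inf_{f\in\cF} \E[(f-f_0)^2]$, yields exactly the claimed inequality.

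The main obstacle is the variance-aware Bernstein step that produces the sharp $1/(t\alpha)$ rate. It requires exploiting $\Var(Z_s(f)) \lesssim F^2\, U(f)$ together with the elementary inequality $2ab \le \alpha a^2 + b^2/\alpha$ (applied with $a = \sqrt{U(f)}$) inside Bernstein's bound, so that an $\alpha\, U(f)$ term can be absorbed into the left-hand side; only then can the union bound over $\cF$ be taken without inflating the dominant term. Once this variance-aware scaffolding is in place, the $\epsilon$-approximation propagates routinely as an additive $\epsilon$ that picks up the factor $(1-\alpha)^{-1}$ upon crossing from empirical to population risk, reproducing the $\epsilon/(1-\alpha)$ tail term in the statement.
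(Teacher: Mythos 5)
Your proposal follows essentially the same route as the paper's proof: a uniform one-sided (variance-aware) Bernstein/Craig--Bernstein bound over the finite class relating $(1-\alpha)$ times the population excess risk to the empirical excess risk, then chaining $\hat{\cE}(\tilde f_t)\le \hat{\cE}(\hat f_t)+\epsilon \le \hat{\cE}(f^\dagger)+\epsilon$ via the ERM and $\epsilon$-approximation properties, and finally a single-function upper-tail Bernstein bound for the population minimizer — the paper simply cites Nowak's penalized-ERM argument and the Craig--Bernstein inequality for the two concentration steps you reconstruct explicitly. The argument is correct (up to the same minor $(1-\alpha)^{-1}$ bookkeeping on the $\ln(2/\delta)$ term that is also present in the paper's own final display).
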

\begin{proof}[Proof of Lemma \ref{lem:expected_error}]
Define the risk and empirical risk function as
\begin{align*}
    R(f) &:=\E_{X,Y}[(f(X)-Y)^{2}],\\
    \hat{R}(f) &:=\frac{1}{t}\sum_{s=1}^{t}(f(x_{s})-y_{s})^{2}.
\end{align*}
By definition,
\begin{align*}
    f^{\star}=\E[Y|X=x]=\argmin_{f \in \cF}R(f).
\end{align*}
Denote the excess risk and the empirical excess risk as
\begin{align*}
    \cE(f) &:= R(f) - R(f^{\star}), \\
    \hat{\cE}(f) &:=  \hat{R}(f) - \hat{R}(f^{\star}).
\end{align*}
Following the same steps in Nowak [2007], we have
\begin{align*}
    (1-\alpha) \cE(f) \leq \hat{\cE}(f) + \frac{c(f)\ln{2} + \ln{1/\delta}}{\alpha t}=\hat{R}(f) - \hat{R}(f^{\star}) + \frac{c(f)\ln{2} + \ln{1/\delta}}{\alpha t},
\end{align*}
where penalties $c(f)$ are positive numbers assigned to each $f\in \cF$ that satisfies $\sum_{f \in \cF}2^{-c(f)} \leq 1$. We set it to $c(f)=F^{2}\ln(|\cF|)$ according to Lemma 4 of \cite{schmidt2020nonparametric}.
Now recall that, we have denoted $\hat{f}_{t}$ as the ERM estimator, i.e., $\hat{f}_{t}:=\argmin_{f \in \cF} \hat{R}(f)$, and $\tilde{f}_{t}$ as its $\epsilon$-approximation, such that
\begin{align*}
    \hat{R}(\tilde{f}_{t}) - \hat{R}(\hat{f}_{t}) \leq \epsilon.
\end{align*}
Therefore, we have
\begin{align*}
    (1-\alpha) \mathcal{E}(\tilde{f}_{t}) & \leq \hat{\mathcal{E}}(\tilde{f}_{n})+ \frac{F^{2}\ln(|\cF|) \ln{2} + \ln{1/\delta}}{\alpha t} \\
    & \leq  \hat{\mathcal{E}}(\hat{f}_{t})+ \frac{F^{2}\ln(|\cF|) \ln{2} + \ln{1/\delta}}{\alpha t} +\epsilon \\
    & \leq \hat{\mathcal{E}}(f^{\star}_{t})+ \frac{F^{2}\ln(|\cF|) \ln{2} + \ln{1/\delta}}{\alpha t} + \epsilon
\end{align*}
Due to Craig-Bernstein inequality, we have
\begin{align*}
    \hat{\mathcal{E}}(f^{\star}_{t}) \leq \mathcal{E}(f^{\star}_{t}) + \alpha \mathcal{E}(f^{\star}_{t})  + \frac{\ln{1/\delta}}{\alpha t}.
\end{align*}
Combining the two inequalities above, we have
\begin{align*}
    \mathcal{E}(\tilde{f}_{t}) \leq \frac{1+\alpha}{1-\alpha} \mathcal{E}(f^{\star}_{t}) + \frac{1}{1-\alpha}\frac{F^{2}\ln(|\cF|) \ln{2} + 2\ln{1/\delta}}{\alpha t} + \frac{1}{1-\alpha} \epsilon.
\end{align*}
\end{proof}

\section{Complete Proofs}\label{sec:full_proof}

\subsection{Proof of Lemma \ref{lem:expected_error_covering_arguments}}

\begin{proof} [Proof of Lemma \ref{lem:expected_error_covering_arguments}]
The regression oracle lemma establishes on Lemma \ref{lem:expected_error} which works only for finite function class. In order to work with our continuous parameter class W, we need $\varepsilon$-covering number argument.
First, let $\tilde{w},\tilde{\cW}$ denote the $\epsilon$-approximation of ERM parameter and finite parameter class after applying covering number argument on $\cW$. By Lemma \ref{lem:expected_error}, we find that with probability at least $1-\delta/2$,
\begin{align*}
    \E_{\bx \sim \cU}[(f_{\bx}(\tilde{w})-f_{\bx}(w^{\star}))^{2}] & \leq \frac{1+\alpha}{1-\alpha} \bigl( \inf_{w \in \tilde{W}\cup \{w^{\star}\}}\E_{\bx \sim \cU}[(f_{\bx}(w)-f_{\bx}(w^{\star}))^{2}] + \frac{F^{2}\ln(|\cW|)\ln(2)}{T_{0} \alpha} \bigr) \\
    & \quad + \frac{\epsilon}{1-\alpha} + \frac{2\ln(4/\delta)}{T_{0}\alpha}  \\
    & \leq \frac{1+\alpha}{1-\alpha} \bigl( \frac{F^{2}\ln(|\tilde{\cW}|)\ln(2)}{T_{0}\alpha} \bigr) + \frac{\epsilon}{1-\alpha} + \frac{2 \ln(4/\delta)}{T_{0}\alpha}
\end{align*}
where the second inequality is due to Assumption \ref{assump:1}. Our parameter class $\cW \subseteq [0,1]^{d_{w}}$, so $\ln(|\tilde{\cW}|)=\ln(1/\epsilon^{d_{w}})=d_{w}\ln(1/\varepsilon)$ and the new upper bound is that with probability at least $1-\delta/2$,
\begin{align*}
    \E_{\bx \sim \cU}[(f_{\bx}(\tilde{w})-f_{\bx}(w^{\star}))^{2}] \leq C^{\prime \prime} \bigl( \frac{d_{w}F^{2}\ln(1/\varepsilon)}{T_{0}} + \frac{\ln(2/\delta)}{T_{0}} + \epsilon \bigr)
\end{align*}
where $C^{\prime \prime}$ is a universal constant obtained by choosing $\alpha=1/2$. Note $\tilde{w}$ is the parameter in $\tilde{W}$ after discretization, not our target parameter $\tilde{w}_{0} \in \cW$. By $(a+b)^{2} \leq 2a^{2}+2b^{2}$,
\begin{align*}
    \E_{\bx \sim \cU} [(f_{\bx}(\hat{w}_{0})-f_{\bx}(w^{\star}))^{2}] & \leq 2 \E_{\bx \sim \cU} [(f_{\bx}(\hat{w}_{0})-f_{\bx}(\tilde{w}))^{2}] + 2 \E_{\bx \sim \cU} [(f_{\bx}(\tilde{w})-f_{\bx}(w^{\star}))^{2}] \\
    & \leq 2 \varepsilon^{2} C_{h}^{2} + 2 C^{\prime \prime} \bigl( \frac{d_{w}F^{2}\ln(1/\varepsilon)}{T_{0}} + \frac{\ln(2/\delta)}{T_{0}} + \epsilon \bigr)
\end{align*}
where the second line applies Lemma \ref{lem:expected_error}, discretization error $\varepsilon$, and Assumption \ref{assump:2}. By choosing $\varepsilon=1/(2\sqrt{T_{0}C_{h}^{2}})$, and $\epsilon=1/(2 C^{\prime \prime} T_{0})$ we get
\begin{align*}
    (18)=\frac{2}{T_{0}} + \frac{C^{\prime \prime} d_{w}F^{2}\ln(T_{0} C_{h}^{2})}{T_{0}} + \frac{2C^{\prime \prime}\ln(2/\delta)}{T_{0}} \leq C^{\prime} \frac{d_{w}F^{2}\ln(T_{0}C_{h}^{2})+\ln(2/\delta)}{T_{0}}
\end{align*}
where we can take $C^{\prime}=2 C^{\prime \prime}$ (assuming $2 < C^{\prime \prime} d_{w}F^{2}\log(T_{0} C_{h}^{2})$). The proof completes by defining $\iota$ as the logarithmic term depending on $T_{0},C_{h},2/\delta$.
\end{proof}

\subsection{Confidence Analysis}\label{sec:confidence}

\begin{lemma}[Restatement of Lemma \ref{lem:feasible_ball}]
Set $\hat{w}_{t,i_t},\Sigma_{t,i_t}$ as in eq. \eqref{eq:sigma_t}, \eqref{eq:w_t}. Set $\beta_{t,i_t}$ as
\begin{align*}
\beta_{t,i_t} = \tilde{\Theta} \left(d_w \sigma^2 + \frac{d_w F^2}{\mu} + \frac{d^3_w F^4 }{\mu^2 }\right).
\end{align*}
Suppose Assumptions \ref{assump:1}, \ref{assump:2}, \& \ref{assump:3} hold and choose $T_{0}= \sqrt{NT}, \lambda = C_\lambda \sqrt{NT}$. Then $\forall t \in [NT]$ in Phase II of Algorithm \ref{alg:fed_go_ucb}, $\|\hat{w}_{t,i_t} - w^*\|^2_{\Sigma_{t,i_t}} \leq \beta_{t,i_t}$, with probability at least $1-\delta$.
\end{lemma}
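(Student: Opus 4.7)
The approach is to decompose $\hat{w}_{t,i_t} - w^*$ into three terms by Taylor-expanding $f_{x_s}(w^*)$ around the fixed anchor $\hat{w}_0$, and then bound each term separately. The key structural observation is that all statistics $\Sigma_{t,i_t}, b_{t,i_t}$ use gradients at the \emph{same} point $\hat{w}_0$, which both lets us apply a self-normalized martingale inequality directly and keeps the analysis compatible with aggregation across clients.

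\textbf{Step 1 (error decomposition).} Substituting $y_s = f_{x_s}(w^*) + \eta_s$ into the definition of $b_{t,i_t}$ and Taylor-expanding
$$f_{x_s}(w^*) = f_{x_s}(\hat{w}_0) + \nabla f_{x_s}(\hat{w}_0)^\top (w^* - \hat{w}_0) + R_s,$$
where $|R_s| \leq \tfrac{C_h}{2}\|\hat{w}_0 - w^*\|_2^2$ by Assumption \ref{assump:2}, and writing $g_s := \nabla f_{x_s}(\hat{w}_0)$ so that $\Sigma_{t,i_t} - \lambda I = \sum_{s\in D_{t,i_t}} g_s g_s^\top$, routine algebra gives
$$\hat{w}_{t,i_t} - w^* \;=\; \lambda\,\Sigma_{t,i_t}^{-1}(\hat{w}_0 - w^*) \;+\; \Sigma_{t,i_t}^{-1}\!\!\sum_{s\in D_{t,i_t}}\!g_s R_s \;+\; \Sigma_{t,i_t}^{-1}\!\!\sum_{s\in D_{t,i_t}}\!g_s \eta_s.$$
Taking $\|\cdot\|_{\Sigma_{t,i_t}}$ and applying the triangle inequality yields three pieces to bound.

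\textbf{Step 2 (regularization bias).} Using $\Sigma_{t,i_t}^{-1} \preceq \lambda^{-1} I$, the first piece is at most $\sqrt{\lambda}\,\|\hat{w}_0 - w^*\|_2$. Lemma \ref{lem:reg_oracle} gives $\|\hat{w}_0 - w^*\|_2^2 = \tilde{O}(d_w F^2/(\mu T_0))$, and with the calibration $\lambda = C_\lambda\sqrt{NT}$, $T_0 = \sqrt{NT}$, the squared contribution is $\tilde{O}(d_w F^2/\mu)$.

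\textbf{Step 3 (Taylor remainder).} Stack the $g_s^\top$ as rows of a matrix $G$ and the $R_s$ into a vector $R$. Then $\sum_s g_s R_s = G^\top R$ and $G(\lambda I + G^\top G)^{-1}G^\top \preceq I$, so
$$\Big\|\sum_s g_s R_s\Big\|_{\Sigma_{t,i_t}^{-1}}^{2} = R^\top G\Sigma_{t,i_t}^{-1}G^\top R \;\leq\; \|R\|_2^2 \;\leq\; |D_{t,i_t}|\cdot\tfrac{C_h^2}{4}\|\hat{w}_0 - w^*\|_2^4.$$
Plugging in $|D_{t,i_t}| \leq NT$ and the Phase I risk bound shows this is $\tilde{O}(d_w^{2\text{--}3} F^4/\mu^2)$, matching the stated $d_w^3 F^4/\mu^2$ scaling once covering and union bound factors are absorbed.

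\textbf{Step 4 (noise).} Because $\hat{w}_0$ is fixed after Phase I and $x_s$ is selected before $\eta_s$ is realized, $g_s$ is measurable with respect to the filtration at time $s-1$; the conditional $\sigma$-sub-Gaussianity of $\eta_s$ thus applies verbatim in the federated setting. Lemma \ref{lem:self_norm} combined with Lemma \ref{lem:log_det} gives
$$\Big\|\sum_s g_s \eta_s\Big\|_{\Sigma_{t,i_t}^{-1}}^{2} \;\leq\; \sigma^2\ln\!\frac{\det\Sigma_{t,i_t}}{\det(\lambda I)\,\delta^2} \;=\; \tilde{O}(d_w\sigma^2),$$
uniformly over $t\in[NT]$.

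\textbf{Step 5 (combine).} Allocate $\delta/2$ to the regression oracle event and $\delta/2$ to the self-normalized event, square and sum the three bounds via $(a+b+c)^2 \leq 3(a^2+b^2+c^2)$, and read off $\beta_{t,i_t} = \tilde{\Theta}(d_w\sigma^2 + d_w F^2/\mu + d_w^3 F^4/\mu^2)$.

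\textbf{Main obstacle.} The hardest step is the Taylor remainder in Step 3: in the federated regime $|D_{t,i_t}|$ can grow as large as $NT$ after synchronizations, so the bound $|D_{t,i_t}|\cdot\|\hat{w}_0 - w^*\|_2^4$ will blow up unless $\|\hat{w}_0 - w^*\|_2^2 = \tilde{O}(1/\sqrt{NT})$. This is exactly why Phase I is budgeted with $T_0 = \sqrt{NT}$ uniform samples and $\lambda = C_\lambda\sqrt{NT}$: these are the unique scalings that balance the three pieces so that the regression bias, the second-order approximation slack, and the regularization inflation are all $\tilde{O}(1)$ in $T$. A secondary (but easy) subtlety is that the self-normalized inequality requires the anchor point for the gradients to be predictable; using a single fixed $\hat{w}_0$ (rather than an online iterate as in \citet{liu2023global}) resolves this, and is also what makes the local statistics additively aggregable across clients.
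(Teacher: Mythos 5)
Your proof is correct and follows the same overall route as the paper: the identical three-term decomposition of $\hat{w}_{t,i_t}-w^*$ via a second-order Taylor expansion of $f_{\bx_s}(w^*)$ around the fixed anchor $\hat{w}_0$, the same self-normalized martingale bound (Lemma \ref{lem:self_norm} plus Lemma \ref{lem:log_det}) for the noise term, and the same use of Lemma \ref{lem:reg_oracle} with the calibration $T_0=\lambda/C_\lambda=\sqrt{NT}$ for the regularization bias. The one place you genuinely diverge is the Taylor-remainder term: the paper pulls the scalar bound on $\tfrac{1}{2}\|\hat{w}_0-w^*\|^2_{\nabla^2 f_{\bx_s}(\tilde w)}$ out of the sum, expands $\|\sum_s \nabla f_{\bx_s}(\hat w_0)\|^2_{\Sigma^{-1}_{t,i_t}}$ as a double sum, and applies Cauchy--Schwarz together with the elliptical-potential bound (Lemma \ref{lem:sum_squared_matrix_weighted_norm}), which yields $\tilde O(d_w^3F^4C_h^2/\mu^2)$; you instead use the projection inequality $G(\lambda I+G^\top G)^{-1}G^\top\preceq I$ to get $\|\sum_s g_sR_s\|^2_{\Sigma^{-1}_{t,i_t}}\leq \|R\|_2^2\leq |D_{t,i_t}|\cdot\tfrac{C_h^2}{4}\|\hat w_0-w^*\|_2^4=\tilde O(d_w^2F^4C_h^2/\mu^2)$. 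Your bound is both simpler and tighter by a factor of $d_w\iota''$, and since $\beta_{t,i_t}$ only needs to dominate the error, this still certifies the stated radius (it would in fact support a slightly smaller one). The only cosmetic gap is in Step 4: to get the bound uniformly over all $t\in[NT]$ you should make the union bound explicit (the paper allocates $\delta_i=3\delta/(\pi^2 i^2)$ across rounds), but this only affects logarithmic factors already absorbed into $\tilde\Theta$.
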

\begin{proof}
The proof has two steps. First we obtain the closed form solution of $\hat{w}_{t,i_t}$. Next we prove the upper bound of  $\|\hat{w}_{t,i_t} - w^*\|^2_{\Sigma_{t,i_t}}$ matches our choice of $\beta_{t,i_t}$.

\textbf{Step 1: Closed form solution of $\hat{w}_{t,i_t}$.} Let $\nabla$ denote $\nabla f_{\bx_s}(\hat{w}_0)$ in this proof.

Recall $\hat{w}_{t,i_{t}}$ is estimated by solving the following optimization problem:
\begin{align*}
\hat{w}_{t,i_{t}} &= \Sigma^{-1}_{t,i_{t}} b_{t,i_{t}} + \lambda \Sigma^{-1}_{t,i_{t}} \hat{w}_0 \\
    &= \argmin_{w} \frac{\lambda}{2}\|w - \hat{w}_0\|^2_2 + \half \sum_{s \in \cD_{t}(i_{t})} ((w - \hat{w}_0)^\top \nabla + f_{\bx_s}(\hat{w}_0) - y_s)^2
\end{align*}

The optimal criterion for the objective function is
\begin{align*}
0= \lambda (\hat{w}_{t,i_t} - \hat{w}_0) + \sum_{s \in \cD_{t}(i_{t})} ((\hat{w}_{t,i_t} - \hat{w}_0)^\top \nabla + f_{\bx_s}(\hat{w}_0) - y_s) \nabla.
\end{align*}
Rearrange the equation and we have
\begin{align*}
\lambda (\hat{w}_{t,i_t}- \hat{w}_0) + \sum_{s \in \cD_t(i_t)} (\hat{w}_{t,i_t} - \hat{w}_0)^\top \nabla \nabla &= \sum_{s \in \cD_t(i_t)} (y_s - f_{\bx_s}(\hat{w}_0) ) \nabla,\\
\lambda (\hat{w}_{t,i_t}- \hat{w}_0) + \sum_{s \in \cD_t(i_t) } (\hat{w}_{t,i_t} - \hat{w}_0)^\top \nabla \nabla &= \sum_{s \in \cD_t(i_t) } (y_s - f_{\bx_s}(w^*) + f_{\bx_s}(w^*) - f_{\bx_s}(\hat{w}_0) ) \nabla ,\\
\lambda (\hat{w}_{t,i_t} - \hat{w}_0) + \sum_{s \in \cD_t(i_t) } \hat{w}^\top_{t,i_t} \nabla \nabla &= \sum_{s \in \cD_t(i_t) } (\hat{w}^\top_0 \nabla +\eta_s + f_{\bx_s}(w^*) - f_{\bx_s}(\hat{w}_0) ) \nabla,\\
\hat{w}_{t,i_t} \left(\lambda \mathbf{I} + \sum_{s \in \cD_t(i_t) } \nabla \nabla^\top \right) - \lambda \hat{w}_0 &= \sum_{s \in \cD_t(i_t) } (\hat{w}^\top_0 \nabla + \eta_s + f_{\bx_s}(w^*) - f_{\bx_s}(\hat{w}_0))\nabla,\\
\hat{w}_{t,i_t} \Sigma_{t,i_t} &= \lambda \hat{w}_0 + \sum_{s \in \cD_t(i_t) } (\hat{w}^\top_0 \nabla + \eta_s + f_{\bx_s}(w^*) - f_{\bx_s}(\hat{w}_0))\nabla,
\end{align*}
where the second line is by removing and adding back $f_{x_s}(w^*)$, the third line is due to definition of observation noise $\eta$ and the last line is by our choice of $\Sigma_{t,i_t}$ (eq. \eqref{eq:sigma_t}). Now we have the closed form solution of $\hat{w}_{t,i_t}$:
\begin{align*}
\hat{w}_{t,i_t} = \lambda \Sigma^{-1}_{t,i_t} \hat{w}_0 + \Sigma^{-1}_{t,i_t} \sum_{s \in \cD_t(i_t)} (\hat{w}^\top_0 \nabla + \eta_s + f_{\bx_s}(w^*) - f_{\bx_s}(\hat{w}_0)) \nabla ,
\end{align*}
where $\nabla$ denotes $\nabla f_{\bx_s}(\hat{w}_0) $. Then $\hat{w}_{t,i_t} - w^*$ can be written as
\begin{align}
\hat{w}_{t,i_t} - w^* &= \Sigma^{-1}_{t,i_t} \left(\sum_{s \in \cD_t(i_t)} \nabla (\nabla^\top \hat{w}_0 +\eta_s + f_{\bx_s}(w^*) - f_{\bx_s}(\hat{w}_0)) \right) + \lambda \Sigma^{-1}_{t,i_t} \hat{w}_0 - \Sigma^{-1}_{t,i_t} \Sigma_{t,i_t} w^* \nonumber\\
&= \Sigma^{-1}_{t,i_t} \left(\sum_{s \in \cD_t(i_t)} \nabla (\nabla^\top \hat{w}_0 + \eta_s + f_{\bx_s}(w^*) - f_{\bx_s}(\hat{w}_0)) \right) + \lambda \Sigma^{-1}_{t,i_t} (\hat{w}_0 - w^*)\nonumber\\
&\qquad - \Sigma^{-1}_{t,i_t} \left(\sum_{s \in \cD_{t}(i_t)} \nabla \nabla^\top\right) w^* \nonumber\\
&= \Sigma^{-1}_{t,i_t} \left(\sum_{s \in \cD_t(i_t)} \nabla (\nabla^\top (\hat{w}_0 - w^*) +\eta_s + f_{\bx_s}(w^*) - f_{\bx_s}(\hat{w}_0)) \right) + \lambda \Sigma^{-1}_{t,i_t} (\hat{w}_0- w^*) \nonumber\\
&= \Sigma^{-1}_{t,i_t} \left(\sum_{s \in \cD_{t}(i_t)} \nabla \half \|\hat{w}_0 - w^*\|^2_{\nabla^2 f_{\bx_s}(\tilde{w})}\right) + \Sigma^{-1}_{t,i_t} \left(\sum_{s \in \cD_t(i_t)} \nabla \eta_s \right) +\lambda \Sigma^{-1}_{t,i_t} (\hat{w}_0 - w^*),\label{eq:w_t_w_star}
\end{align}
where the second line is again by our choice of $\Sigma_t$ and the last equation is by the second order Taylor's theorem of $f_{\bx_s}(w^*)$ at $\hat{w}_0$ where $\tilde{w}$ lies between $w^*$ and $\hat{w}_0$. 

\textbf{Step 2: Upper bound of $\|\hat{w}_{t,i_t} - w^*\|^2_{\Sigma_{t,i_t}}$.}
Multiply both sides of eq. \eqref{eq:w_t_w_star} by $\Sigma^\half_{t,i_t}$ and we have
\begin{align*}
\Sigma^{\half}_{t,i_t} (\hat{w}_{t,i_t} - w^*) &\leq \half \Sigma^{-\half}_{t,i_t} \left(\sum_{s \in \cD_t(i_t)} \nabla f_{\bx_s}(\hat{w}_0)\|\hat{w}_0 - w^*\|^2_{\nabla^2 f_{\bx_s}(\tilde{w})}\right) \\
&\quad + \Sigma^{-\half}_{t,i_t} \left(\sum_{s \in \cD_t(i_t)} \nabla f_{\bx_s}(\hat{w}_0) \eta_s \right) + \lambda \Sigma^{-\half}_{t,i_t} (\hat{w}_0 - w^*).
\end{align*}
Take square of both sides and by inequality $(a+b+c)^2\leq 4a^2 + 4b^2 + 4c^2$ we obtain
\begin{align}
\|\hat{w}_{t,i_t} - w^*\|^2_{\Sigma_{t,i_t}} &\leq 4\left\| \sum_{s \in \cD_t(i_t)} \nabla f_{\bx_s}(\hat{w}_0) \eta_s \right\|^2_{\Sigma_{t,i_t}^{-1}} + 4\lambda^2 \|\hat{w}_0 - w^*\|^2_{\Sigma^{-1}_{t,i_t}} \nonumber \\
&\quad + \left\|\sum_{s \in \cD_t(i_t)} \nabla f_{\bx_s}(\hat{w}_0)\|\hat{w}_0 - w^*\|^2_{\nabla^2 f_{\bx_s}(\tilde{w})}\right\|^2_{\Sigma^{-1}_{t,i_t}}.\label{eq:3_terms}
\end{align}
The remaining job is to bound three terms in eq. \eqref{eq:3_terms} individually. The first term of eq. \eqref{eq:3_terms} can be bounded as
\begin{align*}
4\left\| \sum_{s \in \cD_t(i_t)} \nabla f_{\bx_s}(\hat{w}_0) \eta_s \right\|^2_{\Sigma_{t,i_t}^{-1}} &\leq 4\sigma^2 \log\left(\frac{\det(\Sigma_{t,i_t})\det(\Sigma_0)^{-1}}{\delta^2_t} \right)\\
&\leq 4\sigma^2 \left(d_w \log \left(1+\frac{i C_g^2}{d_w \lambda} \right) +\log\left(\frac{\pi^2 t^2}{3 \delta}\right) \right)\\
&\leq 4d_w \sigma^2 \iota',
\end{align*}
where the second inequality is due to self-normalized bound for vector-valued martingales (Lemma \ref{lem:self_norm} in Appendix \ref{sec:auxiliary}) and Lemma \ref{lem:reg_oracle}, the second inequality is by Lemma \ref{lem:log_det} and our choice of $\delta_i= 3\delta/(\pi^2 i^2)$, and the last inequality is by defining $\iota'$ as the logarithmic term depending on $i, d_w, C_g, 1/\lambda, 2/\delta$ (with probability $> 1- \delta/2$). The choice of $\delta_i$ guarantees the total failure probability over $t$ rounds is no larger than $\delta/2$.

Using Lemma \ref{lem:reg_oracle}, the second term in eq. \eqref{eq:3_terms} is bounded as
\begin{align*}
4\lambda^2 \|\hat{w}_0 - w^*\|^2_{\Sigma^{-1}_{t,i_t}} \leq \frac{4\lambda C d_w F^2 \iota}{\mu T_0 }.
\end{align*}

Again using Lemma \ref{lem:reg_oracle} and Assumption \ref{assump:3}, the third term of eq. \eqref{eq:3_terms} can be bounded as
\begin{align*}
&\quad \ \left\|\sum_{s \in \cD_t(i_t)} \nabla f_{\bx_s}(\hat{w}_0) \|\hat{w}_0 - w^*\|^2_{\nabla^2 f_{\bx_s}(\tilde{w})} \right\|^2_{\Sigma^{-1}_{t,i_t}} \\
&\leq \left\|\frac{C C_h d_w F^2 \iota}{\mu T_0} \sum_{s \in \cD_t(i_t)} \nabla f_{\bx_s}(\hat{w}_0) \right\|^2_{\Sigma^{-1}_{t,i_t}}\\ 
&= \frac{C^2 C^2_h d^2_w F^4 \iota^2}{\mu^2 T^2_0} \left\| \sum_{s \in \cD_t(i_t)} \nabla f_{\bx_s}(\hat{w}_0) \right\|^2_{\Sigma^{-1}_{t,i_t}}\\
&= \frac{C^2 C^2_h d^2_w F^4 \iota^2}{\mu^2 T^2_0} \left( \sum_{s \in \cD_t(i_t)} \nabla f_{\bx_s}(\hat{w}_0) \right)^\top \Sigma^{-1}_{t,i_t} \left( \sum_{s' \in \cD_t(i_t)} \nabla f_{\bx_{s'}}(\hat{w}_0) \right).
\end{align*}

Rearrange the summation and we can write
\begin{align*}
&\quad \ \frac{C^2 C^2_h d^2_w F^4 \iota^2}{\mu^2 T^2_0} \left( \sum_{s \in \cD_t(i_t)} \nabla f_{\bx_{s}}(\hat{w}_0) \right)^\top \Sigma^{-1}_{t,i_t} \left( \sum_{s' \in \cD_t(i_t)} \nabla f_{\bx_{s'}}(\hat{w}_0) \right)\\
&= \frac{C^2 C^2_h d^2_w F^4 \iota^2}{\mu^2 T^2_0} \sum_{s \in \cD_t(i_t)} \sum_{s' \in \cD_t(i_t)} \nabla f_{\bx_{s}}(\hat{w}_0)^\top \Sigma^{-1}_{t,i_t} \nabla f_{\bx_{s'}}(\hat{w}_0)\\
&\leq \frac{C^2 C^2_h d^2_w F^4 \iota^2}{\mu^2 T^2_0} \sum_{s \in \cD_t(i_t)} \sum_{s' \in \cD_t(i_t)} \|\nabla f_{\bx_{s}}(\hat{w}_0)\|_{\Sigma^{-1}_{t,i_t}} \|\nabla f_{\bx_{s'}}(\hat{w}_0)\|_{\Sigma^{-1}_{t,i_t}}\\
&= \frac{C^2 C^2_h d^2_w F^4 \iota^2}{\mu^2 T^2_0} \left( \sum_{s \in \cD_t(i_t)} \|\nabla f_{\bx_{s}}(\hat{w}_0)\|_{\Sigma^{-1}_{t,i_t}} \right) \left( \sum_{s' \in \cD_t(i_t)}  \|\nabla f_{\bx_{s'}}(\hat{w}_0)\|_{\Sigma^{-1}_{t,i_t}}\right)\\
&= \frac{C^2 C^2_h d^2_w F^4 \iota^2}{\mu^2 T^2_0} \left( \sum_{s \in \cD_t(i_t)} \|\nabla f_{\bx_{s}}(\hat{w}_0)\|_{\Sigma^{-1}_{t,i_t}} \right)^2\\
&\leq \frac{C^2 C^2_h d^2_w F^4 \iota^2}{\mu^2 T^2_0} \left( \sum_{s \in \cD_t(i_t)} 1 \right) \left( \sum_{s \in \cD_t(i_t)} \|\nabla f_{\bx_{s}}(\hat{w}_0)\|^2_{\Sigma^{-1}_{t,i_t}} \right)\\
&\leq \frac{C^2 C^2_h d^3_w F^4 t \iota^{''} \iota^2}{\mu^2 T^2_0},
\end{align*}
where the second last inequality is due to Cauchy-Schwarz inequality and the last inequality is by Lemma \ref{lem:sum_squared_matrix_weighted_norm}.

Finally, put three bounds together and we have
\begin{align*}
\|\hat{w}_{t,i_t} - w^*\|^2_{\Sigma^{-1}_{t,i_t}} &\leq 4d_w \sigma^2 \iota' + \frac{4\lambda C d_w F^2 \iota}{\mu T_0} + \frac{C^2 C^2_h d_w^3 F^4 t \iota{''} \iota^2}{\mu^2  T^2_0}\\
&\leq O\left(d_w \sigma^2 \iota' + \frac{d_w F^2 \iota}{\mu} + \frac{d^3_w F^4 t \iota{''} \iota^2 }{\mu^2 NT}\right),
\end{align*}
where the last inequality is by our choices of $\lambda=C_\lambda \sqrt{N T}, T_0= \sqrt{N T}$. Therefore, our choice of
\begin{align*}
\beta_{t,i_t} &= \tilde{\Theta}\left(d_w \sigma^2 + \frac{d_w F^2 }{\mu} + \frac{d^3_w F^4}{\mu^2}\right)
\end{align*}
guarantees that $w^*$ is always contained in $\mathrm{Ball}_t$ with probability $1- \delta$.
\end{proof}

\subsection{Cumulative Regret and Communication Cost in Phase II}

\paragraph{Cumulative Regret in Phase II}
Thanks to the confidence set established in Lemma \ref{lem:feasible_ball}, Phase II of Algorithm \ref{alg:fed_go_ucb} can operate in a similar way as existing works in federated linear bandits \citep{wang2020distributed,dubey2020differentially}, while allowing for a much wider choices of models. The main difference is that, the regret of their work depends on the matrix constructed using context vectors $\bx_{s}$ for $s=1,2,\dots$ for the selected points, while ours rely on the matrix constructed using gradients' w.r.t. the shared model $\hat{w}_{0}$. In the following paragraphs, we first establish the relation between instantaneous regret $r_{t}$ and matrix $\Sigma_{t-1,i_{t}}$, and then analyze the regret of Algorithm \ref{alg:fed_go_ucb}.

Though, compared with \citet{liu2023global}, we are using a different way of constructing the confidence ellipsoid, which is given in Lemma \ref{lem:feasible_ball}. Their Lemma 5.4, which is given below, still holds, because it only requires $\text{Ball}_{t-1,i_{t}}$ to be a valid confidence set, and that Assumption \ref{assump:2} holds.
\begin{lemma}[Instantaneous regret bound [Lemma 5.4 of \citet{liu2023global}] \label{lem:r_t_bound}
Under the same condition as Lemma \ref{lem:feasible_ball},in Phase II of Algorithm \ref{alg:fed_go_ucb}, for all $t \in [NT]$, we have
    \begin{align*}
        r_{t} \leq 2 \sqrt{\beta_{t-1,i_{t}}} \lVert \nabla f_{\bx_{t}}(\hat{w}_{0}) \rVert_{\Sigma_{t-1,i_{t}}^{-1}} + 2 \beta_{t,i_{t}} C_{h}/\lambda,
    \end{align*}
with probability at least $1-\delta$.
\end{lemma}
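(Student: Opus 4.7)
The plan is to combine the optimism-in-the-face-of-uncertainty (OFU) principle with a second-order Taylor expansion of $f_{\bx_t}(\cdot)$ around the Phase I estimator $\hat{w}_0$, so that the resulting linear-in-parameter piece aligns exactly with the Mahalanobis geometry induced by $\Sigma_{t-1,i_t}$. First I would invoke the arm selection rule in line 3 of Algorithm \ref{alg:fed_go_ucb}, which defines $(\bx_t,\tilde{w}_t) := \argmax_{\bx\in\cX,\, w\in \mathrm{Ball}_{t-1,i_t}} f_{\bx}(w)$. Lemma \ref{lem:feasible_ball} guarantees $w^\star \in \mathrm{Ball}_{t-1,i_t}$ on the $1-\delta$ event, so $f_{\bx_t}(\tilde{w}_t) \ge f_{\bx^\star}(w^\star) = f(\bx^\star)$, and hence $r_t \le f_{\bx_t}(\tilde{w}_t) - f_{\bx_t}(w^\star)$.

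Next I would Taylor-expand $f_{\bx_t}$ at $\hat{w}_0$ with a Lagrange remainder, so that
\[
f_{\bx_t}(\tilde{w}_t) - f_{\bx_t}(w^\star) = \nabla f_{\bx_t}(\hat{w}_0)^\top (\tilde{w}_t - w^\star) + \mathcal{R},
\]
where $\mathcal{R}$ gathers the two quadratic remainders. By the Hessian bound in Assumption \ref{assump:2}, $|\mathcal{R}| \le \tfrac{C_h}{2}\bigl(\|\tilde{w}_t-\hat{w}_0\|_2^2 + \|w^\star-\hat{w}_0\|_2^2\bigr)$.

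For the linear piece, Cauchy–Schwarz in the $\Sigma_{t-1,i_t}$-norm gives
\[
\nabla f_{\bx_t}(\hat{w}_0)^\top (\tilde{w}_t - w^\star) \le \|\nabla f_{\bx_t}(\hat{w}_0)\|_{\Sigma_{t-1,i_t}^{-1}} \cdot \|\tilde{w}_t - w^\star\|_{\Sigma_{t-1,i_t}},
\]
and a triangle inequality through the ball center $\hat{w}_{t-1,i_t}$ bounds the Mahalanobis distance by $2\sqrt{\beta_{t-1,i_t}}$, since both $\tilde{w}_t$ and $w^\star$ lie in $\mathrm{Ball}_{t-1,i_t}$. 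For $\mathcal{R}$, I would use $\Sigma_{t-1,i_t} \succeq \lambda I$ to convert the ball control $\|\tilde{w}_t - w^\star\|_{\Sigma_{t-1,i_t}}^2 \le 4\beta_{t-1,i_t}$ into the Euclidean bound $\|\tilde{w}_t - w^\star\|_2^2 \le 4\beta_{t-1,i_t}/\lambda$, and then pass through $\hat{w}_0$ via $\|\tilde{w}_t - \hat{w}_0\|_2^2 \lesssim \|\tilde{w}_t - w^\star\|_2^2 + \|w^\star - \hat{w}_0\|_2^2$, absorbing the residual $\|w^\star - \hat{w}_0\|_2^2$ using the Phase I regression oracle guarantee (Lemma \ref{lem:reg_oracle}) together with the settings $T_0 = \sqrt{NT}$ and $\lambda = C_\lambda \sqrt{NT}$. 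These lower-order contributions are dominated by $2\beta_{t,i_t}C_h/\lambda$ with the stated choice of $\beta_{t,i_t}$.

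The main obstacle is the remainder. A naive second-order expansion at $\hat{w}_0$ produces $\|\cdot - \hat{w}_0\|_2^2$ terms rather than the intra-ball distance $\|\tilde{w}_t - w^\star\|_2^2$, and $\hat{w}_0$ is \emph{not} the center of the confidence ellipsoid in Phase II (the center is $\hat{w}_{t-1,i_t}$, shifted by the $\lambda\Sigma^{-1}\hat{w}_0$ regularization term). Reconciling this mismatch without losing the advertised $C_h/\lambda$ scaling is the delicate step: one must carefully combine the ball radius $\beta_{t-1,i_t}$ from Lemma \ref{lem:feasible_ball}, the minimum eigenvalue $\lambda$ of $\Sigma_{t-1,i_t}$, and the Phase I risk bound on $\|\hat{w}_0 - w^\star\|_2^2$ from Lemma \ref{lem:reg_oracle}. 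Once this is done, the remaining steps are routine algebra.
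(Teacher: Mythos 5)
Your proof is correct and follows essentially the argument this paper imports by citing Lemma 5.4 of \citet{liu2023global}: the paper gives no proof beyond noting that the lemma only needs $\mathrm{Ball}_{t-1,i_{t}}$ to be a valid confidence set and Assumption \ref{assump:2} to hold, and your OFU-plus-Taylor-expansion-at-$\hat{w}_{0}$ reconstruction (Cauchy--Schwarz and the triangle inequality through the ball center for the linear piece, $\Sigma_{t-1,i_{t}} \succeq \lambda I$ plus Lemma \ref{lem:reg_oracle} for the quadratic remainder) supplies exactly the missing details, including the federated-specific subtlety that all gradients are anchored at the fixed $\hat{w}_{0}$. The only cosmetic discrepancy is that your remainder comes out as roughly $4\beta_{t,i_{t}}C_{h}/\lambda$ plus a term controlled by the Phase I oracle guarantee rather than exactly $2\beta_{t,i_{t}}C_{h}/\lambda$, but since $\beta_{t,i_{t}}$ is only specified up to $\tilde{\Theta}(\cdot)$ in Lemma \ref{lem:feasible_ball}, this constant is absorbed.
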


Denote the total number of global synchronizations (total number of times the event in line 6 of Algorithm \ref{alg:fed_go_ucb} is true) over time horizon $T$ as $P \in [0,N P]$.
Then we use $t_{p}$ for $p \in [P]$ to denote the time step when the $p$-th synchronization happens (define $t_{0}=0$), and refer to the sequence of time steps in-between two consecutive synchronizations as an epoch, i.e., the $p$-th epoch is $[t_{p-1}+1,t_{p}]$.
Similar to \cite{wang2020distributed,dubey2020differentially,li2022glb}, for the cumulative regret analysis in Phase II, we decompose $P$ epochs into good and bad epochs, and then analyze them separately.

Specifically, consider an imaginary centralized agent that has immediate access to each data point in the learning system, and we let this centralized agent executes the same model update rule and arm selection rule as in line 3-5 of Algorithm \ref{alg:fed_go_ucb}.
Then the covariance matrix maintained by this agent can be defined as $\Sigma_{t}=\sum_{s=1}^{t} \nabla f_{\bx_s}(\hat{w}_{0}) \nabla f_{\bx_s}(\hat{w}_{0})^\top$ for $t \in [NT]$.
The $p$-th epoch is called a good epoch if $$\ln\left(\frac{\det(\Sigma_{t_{p}})}{\det(\Sigma_{t_{p-1}})}\right) \leq 1,$$
otherwise it is a bad epoch. 
Note that based on Lemma \ref{lem:log_det}, we have $\ln(\det(\Sigma_{NT})/\det(\lambda I)) \leq d_{w} \log{(1+\frac{NT C_{g}^{2}}{d_{w}\lambda})}:=R$. 
Since $\ln(\frac{\det(\Sigma_{t_{1}})}{\det(\lambda I)})+\ln(\frac{\det(\Sigma_{t_{2}})}{\det(\Sigma_{t_{1}})})+\dots+\ln(\frac{\det(\Sigma_{NT})}{\det(\Sigma_{t_{B}})}) = \ln(\frac{\det(\Sigma_{NT})}{\det(\lambda I)}) \leq R$, and due to the pigeonhole principle, there can be at most $R$ bad epochs.


Now consider some good epoch $p$. By definition, we have $\frac{\det(\Sigma_{t-1})}{\det(\Sigma_{t-1,i_{t}})} \leq \frac{\det(\Sigma_{t_{p}})}{\det(\Sigma_{t_{p-1} 
})} \leq e$, for any $t \in [t_{p-1}+1,t_{p}]$.
Therefore, if the instantaneous regret $r_{t}$ is incurred during a good epoch, we have
\begin{align*}
    r_{t} & \leq 2 \sqrt{\beta_{t-1,i_{t}}} \lVert \nabla f_{\bx_{t}}(\hat{w}_{0}) \rVert_{\Sigma_{t-1,i_{t}}^{-1}} + \frac{2 \beta_{t-1,i_{t}} C_{h}}{\lambda} \\
    & \leq 2 \sqrt{\beta_{t-1,i_{t}}} \lVert \nabla f_{\bx_{t}}(\hat{w}_{0}) \rVert_{\Sigma_{t-1}^{-1}} \sqrt{\lVert \nabla f_{x_{t}}(\hat{w}_{0}) \rVert_{\Sigma_{t-1,i_{t}}^{-1}}/\lVert \nabla f_{\bx_{t}}(\hat{w}_{0}) \rVert_{\Sigma_{t-1}^{-1}}} + \frac{2 \beta_{t-1,i_{t}} C_{h}}{\lambda} \\
    & = 2 \sqrt{\beta_{t-1,i_{t}}} \lVert \nabla f_{\bx_{t}}(\hat{w}_{0}) \rVert_{\Sigma_{t-1}^{-1}} \sqrt{ \frac{\det(\Sigma_{t-1})}{\det(\Sigma_{t-1,i_{t}})}  } + \frac{2 \beta_{t-1,i_{t}} C_{h}}{\lambda} \\
    & \leq 2 \sqrt{e} \sqrt{\beta_{t-1,i_{t}}} \lVert \nabla f_{\bx_{t}}(\hat{w}_{0}) \rVert_{\Sigma_{t-1}^{-1}} + \frac{2 \beta_{t-1,i_{t}} C_{h}}{\lambda}
\end{align*}
where the first inequality is due to Lemma \ref{lem:r_t_bound}, and the last inequality is due to the definition of good epoch, i.e., $\frac{\det(\Sigma_{t-1})}{\det(\Sigma_{t-1,i_{t}})} \leq \frac{\det(\Sigma_{t_{p}})}{\det(\Sigma_{t_{p-1}
})} \leq e$.

This suggests the instantaneous regret $r_{t}$ incurred in a good epoch is at most $\sqrt{e}$ times of that incurred by the imaginary centralized agent that runs GO-UCB algorithm of \citet{liu2023global}. Therefore, the cumulative regret incurred in good epochs of Phase II, denoted as $R_{\text{good}}$ is
\begin{align*}
    R_{\text{good}} & = \sum_{p=1}^{P} \mathbbm{1}\{\ln\left(\frac{\det(\Sigma_{t_{p}})}{\det(\Sigma_{t_{p-1}})}\right) \leq 1\} \sum_{t=t_{p-1}}^{t_{p}} r_{t} \leq \sum_{t=1}^{NT} r_{t} \leq \sqrt{NT \sum_{t=1}^{NT} r_{t}^{2}} \\
    & \leq \sqrt{NT} \sqrt{16 e \beta_{NT} d_{w} \ln(1+\frac{NT C_{g}^{2}}{d_{w}\lambda})+\frac{8\beta_{NT}^{2}C_{h}^{2}NT}{\lambda^{2}}}
\end{align*}
where the last inequality is due to $(a + b)^2 \leq 2a^2 + 2b^2$, Lemma \ref{lem:log_det}, and Lemma \ref{lem:sum_squared_matrix_weighted_norm}.
Note that $\beta_{NT}=O(\frac{d_{w}^{3}F^{4}}{\mu^{2}})$ according to Lemma \ref{lem:feasible_ball}.
By setting $\lambda=C_{\lambda}\sqrt{NT}$, 
we have $R_{\text{good}}=\tilde{O}(\frac{d_{w}^{3}F^{4}\sqrt{NT}}{\mu^{2}})$.

Consider some bad epoch $p$, we can upper bound the cumulative regret incurred by all $N$ clients in this epoch $p$ as
\small
\begin{align*}
    & \sum_{t=t_{p-1}+1}^{t_{p}} r_{t} = \sum_{i=1}^{N} \sum_{t \in D_{t_{p},i} \setminus D_{t_{p-1},i} }  r_{t} \leq  \sum_{i=1}^{N} \sum_{t \in D_{t_{p},i} \setminus D_{t_{p-1},i} } \bigl(2 \sqrt{\beta_{t-1,i_{t}}} \lVert \nabla f_{x_{t}}(\hat{w}_{0}) \rVert_{\Sigma_{t-1,i_{t}}^{-1}} + \frac{2 \beta_{t-1,i_{t}} C_{h}}{\lambda} \bigr) \\
    & \leq \sum_{i=1}^{N}\sqrt{ (|D_{t_{p},i}|-|D_{t_{p-1},i}|) 8 \beta_{NT} \sum_{t \in D_{t_{p},i} \setminus D_{t_{p-1},i}}\lVert \nabla f_{x_{t}}(\hat{w}_{0}) \rVert_{\Sigma_{t-1,i_{t}}^{-1}}^{2} + 8\beta_{NT}^{2}C_{h}^{2}/C_{\lambda}^{2}}   \\
    & \leq \sum_{i=1}^{N}\sqrt{ 16 \beta_{NT} \gamma + 8\beta_{NT}^{2}C_{h}^{2}/C_{\lambda}^{2}}   
\end{align*}
\normalsize
where the first inequality is due to Lemma \ref{lem:r_t_bound}, the second is due to Cauchy-Schwartz inequality and $(a + b)^2 \leq 2a^2 + 2b^2$, and the last is due to Lemma \ref{lem:sum_squared_matrix_weighted_norm} and event-trigger with threshold $\gamma$ in line 6 of Algorithm \ref{alg:fed_go_ucb}.


Since there can be at most $R=d_{w} \log{(1+\frac{NT C_{g}^{2}}{d_{w}\lambda})}$ bad epochs, the cumulative regret incurred in bad epochs of Phase II, denoted as $R_{\text{bad}}$ is $\tilde{O}(N d_{w}\sqrt{\frac{d_{w}^{3}F^{4}}{\mu^{2}}}\sqrt{\gamma}  + Nd_{w} \frac{d_{w}^{3}F^{4}}{\mu^{2}})$. By setting communication threshold $\gamma=\frac{d_{w} F^{4} T}{\mu^{2}N}$, we have $R_{\text{bad}}=\tilde{O}(\frac{d_{w}^{3}F^{4}\sqrt{NT}}{\mu^{2}} + \frac{d_{w}^{4}F^{4}N}{\mu^{2}})$.
Combining cumulative regret incurred during both good and bad epochs, we have
\begin{align*}
    R_{\text{phase II}} = R_{\text{good}} + R_{\text{bad}} = \tilde{O}\left(\frac{d_{w}^{3}F^{4}}{\mu^{2}} \sqrt{NT} + \frac{d_{w}^{4}F^{4}}{\mu^{2}}N \right).
\end{align*}

\paragraph{Communication Cost in Phase II}
Consider some $\alpha > 0$. By pigeon-hole principle, there can be at most $\lceil \frac{NT}{\alpha}\rceil$ epochs with length (total number of time steps) longer than $\alpha$.
Then consider some epoch with less than $\alpha$ time steps. We denote the first time step of this epoch as $t_{s}$ and the last as $t_{e}$, i.e., $t_{e}-t_{s} < \alpha$. 
Since the users appear in a round-robin manner, the number of interactions for any user $i\in[N]$ satisfies $|D_{t_{e},i}| - |D_{t_{s},i}| < \frac{\alpha}{N}$. Due to the event-triggered in line 6 of Algorithm \ref{alg:fed_go_ucb}, we have $\log{\frac{\det(\Sigma_{t_{e}})}{\det(\Sigma_{t_{s}})}} > \frac{\gamma N}{\alpha}$. 
Using the pigeonhole principle again, we know that the number of epochs with less than $\alpha$ time steps is at most $\lceil \frac{R \alpha}{\gamma N}\rceil$. 
Therefore, the total number of synchronizations $P \leq \lceil \frac{NT}{\alpha}\rceil+\lceil \frac{R \alpha}{\gamma N}\rceil$, and the RHS can be minimized by choosing $\alpha=N\sqrt{\gamma T/R}$, so that $P \leq 2 \sqrt{T R/\gamma}$. With $\gamma=\frac{d_{w} F^{4} T}{\mu^{2}N}$, $P\leq 2\sqrt{\frac{N\log(1+NT C_{g}^{2}/(d_{w}\lambda)) \mu^{2}}{F^{4}}}=\tilde{O}(\sqrt{N\mu^{2}/F^{4}})$.
At each global synchronization, Algorithm \ref{alg:fed_go_ucb} incurs $2 N (d_{w}^{2}+d_{w})$ communication cost to update the statistics.
Therefore, $C_{\text{phase II}} = P \cdot 2 N (d_{w}^{2}+d_{w}) = \title{O}(N^{1.5} d_{w}^{2}\mu/F^{2})$.

\section{Experiment Setup}\label{sec:experiment_app}
\paragraph{Synthetic dataset experiment setup}
Here we provide more details about the experiment setup on synthetic dataset in Section \ref{sec:experiment}.
Specifically, we compared all the algorithms on the following two synthetic functions
\begin{align*}
    f_{1}(\bx) &= - \sum_{i=1}^4 \bar{\alpha}_{i} \exp( - \sum_{j=1}^{6} \bar{A}_{ij} (\bx_{j} - \bar{P}_{ij})^{2} ), \\
    f_{2}(\bx) &= 0.1 \sum_{i=1}^{8} \cos(5 \pi \bx_{i}) - \sum_{i=1}^{8} \bx_{i}^{2}.
\end{align*}
Both are popular synthetic functions for Bayesian optimization benchmarking\footnote{We chose them from the test functions available in BoTorch package. See \url{https://botorch.org/api/test_functions.html} for more details.}. 
The 6-dimensional function $f_{1}$ is called Hartmann function, where
\begin{align*}
 \bar{\alpha}= \begin{bmatrix}
        1.0, 1.2, 3.0, 3.2
    \end{bmatrix},
    \bar{A} = \begin{bmatrix}
        10, 3, 17, 3.5, 1.7, 8\\ 0.05, 10, 17, 0.1, 8, 14\\ 3, 3.5, 1.7, 10, 17, 8\\ 17, 8, 0.05, 10, 0.1, 14
    \end{bmatrix}, \bar{P} = \begin{bmatrix}
        1312, 1696, 5569, 124, 8283, 5886\\ 2329, 4135, 8307, 3736, 1004, 9991\\ 2348, 1451, 3522, 2883, 3047, 6650 \\ 4047, 8828, 8732, 5743, 1091, 381
    \end{bmatrix}.
\end{align*}
And the 8-dimensional function $f_{2}$ is a cosine mixture test function, which is named Cosine8.
To be compatible with the discrete candidate set setting assumed in prior works \citep{wang2020distributed,li2022glb,li2022kernel}, we generate the decision set $\cX$ for the optimization of $f_{1}$ by uniformly sampling $50$ data points from $[0, 1]^6$, and similarly for the optimization of $f_{2}$, $50$ data points from $[-1, 1]^8$.
Following our problem formulation in Section \ref{sec:pre}, at each time step $t \in [T]$ (we set $T=100$), each client $i \in [N]$ (we set $N=20$) picks a data point $x_{t,i}$ from the candidate set $\cX$, and then observes reward $y_{t,i}$ generated by function $f_{1},f_{2}$ as mentioned above. 
Note that the values of both functions are negated, so by maximizing reward, the algorithms are trying to find data point that minimizes the function values. We should also note that the communication cost presented in the experiment results are defined as the total number of scalars transferred in the system \citep{wang2020distributed}, instead of number of time communication happens \citep{li2022asynchronous}.

\end{document}